\def\eqref#1{equation~\ref{#1}}
\def\1{\bm{1}}
\DeclareMathAlphabet{\mathsfit}{\encodingdefault}{\sfdefault}{m}{sl}
\SetMathAlphabet{\mathsfit}{bold}{\encodingdefault}{\sfdefault}{bx}{n}
\newcommand{\R}{\mathbb{R}}
\DeclareMathOperator*{\argmin}{arg\,min}
\newtheorem{theorem}{Theorem}
\newtheorem*{definition}{Definition}
\newtheorem*{objective}{Objective}
\DeclarePairedDelimiter\abs{\lvert}{\rvert}
\newcommand{\dist}{\mathcal{D}}
\newcommand{\dists}{\mathfrak{D}}
\newcommand{\N}{\mathbb{N}}
\newcommand{\prob}{\mathbf{Pr}}
\newcommand{\expectation}{\mathbb{E}}
\newcommand{\var}{\textnormal{VaR}}
\newcommand{\cvar}{\textnormal{CVaR}}
\newcommand{\wormobj}{\hyperref[obj:worm]{\worm}}
\newcommand{\varvar}{\textnormal{\texttt{VaR}}}
\newcommand{\bestpaths}{\texttt{best\_paths}}
\newcommand{\bestsamples}{\texttt{best\_samples}}
\newcommand{\losses}{\texttt{losses}}
\newcommand{\bucketedvar}{\hyperref[alg:bucketed-var]{\textnormal{BucketedVaR}}}
\newcommand{\exampledronenav}{\hyperref[example:drone-nav]{\textnormal{Example 1}}}
\newcommand{\quantile}{\textnormal{quantile}}
\newcommand{\percp}[3]{{#1}\textsuperscript{\color{gray}[{#2}, {#3}]}}
\newcommand{\agentgraph}{(V, E, X, T, F, L, s, t, \dist_s)}
\colorlet{mygreen}{green!50!black}
\colorlet{myblue}{blue!70!white}
\definecolor{SoftRed}{RGB}{160, 40, 40}
\newcommand{\change}[1]{#1}
\definecolor{SoftBlue}{RGB}{50, 100, 200}
\title{Risk-Sensitive Agent Compositions}
\author{Guruprerana Shabadi, Rajeev Alur \\
University of Pennsylvania\\
\texttt{\{shabadi,alur\}@seas.upenn.edu}
}
\begin{document}

\maketitle

\begin{abstract}
From software development to robot control, modern agentic systems decompose complex objectives into a sequence of subtasks and choose a set of specialized AI agents to complete them.
We formalize agentic workflows as directed acyclic graphs, called agent graphs, where edges represent AI agents and paths correspond to feasible compositions of agents.
Real-world deployment requires selecting agent compositions that not only maximize task success but also minimize violations of safety, fairness, and privacy requirements which demands a careful analysis of the low-probability (tail) behaviors of compositions of agents.
In this work, we consider risk minimization over the set of feasible agent compositions and seek to minimize the value-at-risk and the conditional value-at-risk of the loss distribution of the agent composition where the loss quantifies violations of these requirements.
We introduce an efficient algorithm which traverses the agent graph and finds a near-optimal composition of agents.
It uses a dynamic programming approach to approximate the value-at-risk of agent compositions by exploiting a union bound.
Furthermore, we prove that the approximation is near-optimal asymptotically for a broad class of practical loss functions.
\change{We also show how our algorithm can be used to approximate the conditional value-at-risk as a byproduct.}
To evaluate our framework, we consider a suite of video game-like control benchmarks that require composing several agents trained with reinforcement learning and demonstrate our algorithm's effectiveness in approximating the value-at-risk and identifying the optimal agent composition. 
\end{abstract}

\section{Introduction}
Modern agentic workflows orchestrate specialized AI agents to tackle complex tasks demanding a diverse range of skills. 
Typically, a high-level planning agent decomposes the main task into a sequence of subtasks and selects appropriate worker agents to execute them. 
These worker agents can be powerful generalist models, such as large language models (LLMs) and vision-language models (VLMs), or control policies trained with reinforcement learning or imitation learning.
Such systems have demonstrated considerable success in automating tasks across various domains, including software development~\citep{zhang:aflow,niu:flow,hu:qualityflow}, complex information retrieval~\citep{zhang:web-search-agentic-deep}, scientific discovery~\citep{gridach:agentic-science}, and robot control~\citep{ichter:saycan,feng:planning-vlm,yang:vlm-tamp,Zhao:CoT-VLA}.

In addition to maximizing task success, deploying agentic systems in the real world requires minimizing various forms of risk. 
The stochastic nature of both environment and agents means that we have to rigorously analyze low-probability but high-consequence tail behaviors that emerge when multiple agents interact. 
Our work follows a rich body of literature in risk-sensitive planning and control that dates back to the 1970s~\citep{howard:risk-sensitive=mdp,whittle:risk-sensitive-control} where the objective is to optimize a risk measure of the loss (or reward) distribution as opposed to the expectation. 
A risk measure is a function that maps a loss distribution to a real number and takes into account attributes like the variance and the tail of the distribution. 
Some examples include the value-at-risk (VaR) which corresponds to the tail quantile and conditional value-at-risk (CVaR) which is the expected tail loss.
In this work, we consider loss functions that quantify violations of requirements such as safety, fairness, and privacy. 

In our formalization, we represent an agentic workflow by an \textit{agent graph} (example in \Cref{fig:agent-graph}) which is a directed acyclic graph in which edges correspond to agents (denoted \(\pi_1, \dots, \pi_4\)) and paths correspond to feasible agent compositions that achieve the overall objective.
In~\Cref{fig:agent-graph}, we have two compositions of agents to pick from: \(p_A \coloneqq \pi_1 \to \pi_3\) and \(p_B \coloneqq \pi_2 \to \pi_4\).
Finally, the risk associated with an agent's behavior is quantified with a real-valued loss function \(L_i : T_i \to \R\) that takes as input a trace of execution of the corresponding agent \(\pi_i\).

\begin{wrapfigure}[11]{r}{0.35\textwidth}
\begin{center}
    \begin{tikzpicture}[scale=0.85, transform shape,
    darkgreen/.style={color=green!50!black},
    lightblue/.style={color=blue!70!white}
] 
    \tikzset{vertex/.style={circle, draw, minimum size=22pt, inner sep=1pt}}
    \tikzset{edge/.style={->, >=latex, thick}} 

    \node[vertex] (s) at (0,0) {S}; 
    \node[vertex] (a) at (2,1) {A}; 
    \node[vertex] (b) at (2,-1) {B}; 
    \node[vertex] (e) at (4,0) {F}; 

    \draw[edge, darkgreen] (s) to node[above, midway, sloped, font=\footnotesize] {$\pi_1$} node[below, midway, sloped, font=\footnotesize] {$L_1$} (a);
    \draw[edge, lightblue] (s) to node[below, midway, sloped, font=\footnotesize] {$\pi_2$} node[above, midway, sloped, font=\footnotesize] {$L_2$} (b);
    \draw[edge, darkgreen] (a) to node[above, midway, sloped, font=\footnotesize] {$\pi_3$} node[below, midway, sloped, font=\footnotesize] {$L_3$} (e);
    \draw[edge, lightblue] (b) to node[below, midway, sloped, font=\footnotesize] {$\pi_4$} node[above, midway, sloped, font=\footnotesize] {$L_4$} (e);
\end{tikzpicture}
\end{center}
\caption{An agent graph}\label{fig:agent-graph}
\end{wrapfigure}

\textbf{Example 1: DroneNav.}\label{example:drone-nav}
Consider four control agents \(\pi_1, \dots, \pi_4\) that have been trained to navigate a drone between the rooms \(S \xrightarrow{\pi_1} A\), \(S \xrightarrow{\pi_2} B\), \(A \xrightarrow{\pi_3} F\), and \(B \xrightarrow{\pi_4} F\).
We define loss functions \(L_i : T_i \to \R\) for each agent that capture safety requirements.
They take as input the trajectory from a rollout of a policy and return the negative minimum distance between the obstacles in the rooms and the trajectory.
Now, consider the task of navigating a drone from room \(S\) to the target room \(F\) which can be achieved by two paths: one passing through room \(A\) and another through room \(B\).
This can be represented by the agent graph in~\Cref{fig:agent-graph} where we have the two compositions of agents to choose from.
The loss functions of the compositions of agents \(p_A\) and \(p_B\) correspond to the negative minimum distance between the obstacles and the \textit{sequence} of trajectories of the agents in each composition.
In particular, for the composition \(p_A\), the loss function is denoted \(L_{p_A} : T_1 \times T_3 \to \R\) and is given by \(L_{p_A}(t_1, t_3) \coloneqq \max(L_1(t_1), L_3(t_3))\) where \(t_1\) and \(t_3\) are trajectories from rollouts of \(\pi_1\) and \(\pi_3\) respectively.
\(L_{p_B}\) is symmetrically defined.

\textbf{Example 2: Information retrieval.}
Suppose we have access to an LLM agent that we are using in an information retrieval and processing task. 
In this setting, a natural requirement is to minimize the amount of hallucinated information, i.e., data that is not taken from data sources or the LLM context.
We define the loss function to be a metric that quantifies the amount of hallucinated information in the outputs which can be a score produced by an LLM-as-a-Judge.
Next, consider the task of constructing an agentic system that takes as input the name of a country and produces a graph of the evolution of literacy rates in the country over a period of five decades.
Also, suppose we have two data sources---UNESCO and the World Bank---from which the system needs to fetch relevant pages and then write a Python script to generate the plot. 
This can be represented by the agent graph in~\Cref{fig:agent-graph} where \(\pi_1\) and \(\pi_2\) represent LLM agents tasked with retrieving relevant text data from the two sources, and \(\pi_3\) and \(\pi_4\) represent agents that write the script to produce the plot from the data.
Yet again, the loss of the composition of agents is given by \(L_{p_A}(t_1, t_3) \coloneqq \max(L_1(t_1), L_3(t_3))\) which now corresponds to the maximum amount of information hallucinated by either of the two agents \(\pi_1\) or \(\pi_3\) in their corresponding outputs \(t_1\) and \(t_3\).

As noted in these examples, when the loss of each agent quantifies violations of safety, fairness, and privacy requirements, the loss of a composition of agents---the quantity we want to minimize---is the maximum of the losses incurred by each agent.
In contrast, existing literature in risk-sensitive planning usually considers the \textit{cumulative} loss~\citep{ahmadi:risk-averse-ssp,kretinsky:cvar-mdp} which is the natural choice when the losses represent costs.

Given an agent graph, our objective is to find the path representing an agent composition that minimizes the \textit{value-at-risk}, denoted \(\var_\alpha\), which is the \((1-\alpha)\)-quantile of the loss distribution of the agent composition for a user-defined risk budget \(\alpha > 0\). 
In other words, minimizing \(\var_\alpha\) minimizes a constant \(\ell^* > 0\) such that with probability at least \(1-\alpha\), each agent in the chosen composition of agents incurs a loss of at most \(\ell^*\). 
\change{We also consider another risk measure known as the \textit{conditional value-at-risk} (also known as expected shortfall), denoted \(\cvar_\alpha\), which is defined as the expected loss in the worst \(\alpha\) fraction of the loss distribution.}
In the rest of this work, we assume only black-box access to the agents and focus on estimating these risk measures through sampling.

A straightforward approach to minimizing the value-at-risk is to consider each feasible agent composition, estimate its tail quantile, and pick the optimal composition.
However, this method is inefficient since the graph can have an exponential number of compositions as a function of the number of agents. 
We overcome this hurdle with an efficient algorithm that scales polynomially in the number of agents and incrementally builds a near-optimal path while traversing the agent graph. 
At its heart is a quantile estimation method that approximates the value-at-risk of a composition of agents by dividing the risk budget \(\alpha\) amongst the agents and applying a union bound. 
It then finds the optimal allocation of the risk budget over a discretized set of budgets using a dynamic programming algorithm.
We prove that this approximation is near-optimal asymptotically when the losses of agents are given by independent random variables.
This independence assumption holds when the losses are functions of the agent behavior and do not directly impact task accuracy or performance. 
Indeed, safety, fairness, and privacy are often properties of the behavior and are orthogonal to task success.
\change{Further, since the conditional value-at-risk can be approximated as the average of tail quantiles, we show how it can be recovered as a byproduct of our algorithm.}

We implement our algorithm and test its performance on a series of reinforcement learning (RL) environments.
Here, we look at long-horizon tasks that require composing several agents trained with RL to complete them.
We consider two tasks---safety in navigation and fair resource consumption---and design suitable loss functions that capture these requirements.
We confirm empirically that our algorithm produces tight approximations of the value-at-risk \change{and the conditional value-at-risk} thereby finding optimal agent compositions.
\change{Further experiments show that our algorithm is robust to a reasonable amount of correlation between agent losses.}

In summary, our contributions consist of the following:
\begin{enumerate}
    \item We formalize the risk minimization objective for agentic workflows as the problem of finding an agent composition that minimizes the value-at-risk of the loss quantifying violations of safety, fairness, and privacy requirements (\Cref{sec:worst-case-risk-min}).
    \item Then, we introduce an efficient algorithm that approximates this optimization problem with a quantile estimation method using the union bound and dynamic programming. Furthermore, we prove that this approximation is near-optimal asymptotically under independence assumptions. (\Cref{sec:algo})
    \item Finally, we evaluate our algorithm on compositional reinforcement learning benchmarks using two tasks: safety-critical navigation and fair resource consumption. 
    We design appropriate loss functions for these tasks and demonstrate that our algorithm successfully identifies optimal agent compositions while providing tight approximations of their \change{risk measures} (\Cref{sec:evals}).
\end{enumerate}

\subsection{Related work}
Our work builds upon a large body of literature on automated planning and control under uncertainty~\citep{ghallab:book-automated-planning}.
Traditionally, the planning problem is modeled as a Markov decision process (MDP) over which the expected (un)discounted cumulative loss is minimized using algorithms like value iteration or reinforcement learning. 
Towards designing risk-aware agents, recent works have instead considered the optimization of risk measures that look at the tail of the loss distribution instead of the expected loss. 
\citet{ahmadi:risk-averse-ssp,kretinsky:cvar-mdp} study risk minimization for reachability and mean-payoff objectives in MDPs, while \citet{bastani:risk-sens-rl,wang:cvar-rl,greenberg:risk-averse-rl,choi:risk-sens-actor-critic} develop risk-sensitive reinforcement learning algorithms. 
These works consider risk measures that fall into the class of coherent risk measures~\citep{artzner:crm} that include the conditional value-at-risk and the entropic value-at-risk.
Risk-sensitive objectives have also been considered in the stochastic control literature. 
Methods for risk-sensitive linear, quadratic, and Gaussian control~\citep{whittle:risk-sensitive-control} have been developed along with more recent methods for non-linear model predictive control~\citep{nishimura:risk-sens-mpc}. 
We also refer to~\citet{wang:risk-averse-autonomy-survey} for a comprehensive survey of this topic.
However, all of these works stand in contrast with our framework where we want to minimize a risk measure of the \textit{maximum} incurred loss instead of the cumulative loss.

Related lines of research in robotics include hierarchical reinforcement learning~\citep{jothimurugan21:dirl,dalal:plan-seq-learn,zhou:spire,tao:surg-robs-long-hor,lin:sketch-rl} and task and motion planning (TAMP)~\citep{kaelbling:tamp} in which long-horizon control objectives are decomposed into subtasks that are carried out by motion planners or other low-level controllers trained with RL or imitation learning. 
Similar frameworks have been proposed for automatic generation of agentic workflows to automate software development tasks~\citep{niu:flow,zhang:aflow,hu:qualityflow} that sequence LLM-agents to complete smaller subtasks.
Agentic systems have also been proposed for the tasks of complex information retrieval~\citep{zhang:web-search-agentic-deep} and scientific discovery~\citep{gridach:agentic-science}.
These methods typically optimize the overall task success probability and do not consider risk minimization.


\section{Risk Minimization Over Agent Graphs}\label{sec:worst-case-risk-min}
In this work, an \textit{agent} is any machine learning model \(f : X \to \dists(T \times Y)\) where \(X\) is the input domain, \(T\) is the set of computational traces capturing the set of agent behaviors, \(Y\) is the output domain, and \(\dists(T \times Y)\) is the set of probability distributions over \(T \times Y\). 
For example, for an agent controlling a robot, \(X\) is the set of initial states, \(T\) is the set of trajectories of the robot in its environment, and \(Y\) is the set of final states.
In the case of a language model, \(X\) is the set of prompts, \(Y\) is the set of final responses, and \(T\) is the set of intermediate tokens used to produce the final response. 
This can be the Chain-of-Thought reasoning traces, or interactions with a tool like retrieval from the web or running code.
We consider a distribution over traces and outputs, as opposed to a single output, to enable modeling stochastic agent behaviors along with interactions with a stochastic environment. 
We only assume sampling access to the distribution over outputs and denote \((t, y) \sim f(x)\) for sampling a trace-output pair \((t, y)\) from the agent \(f\) for the input \(x\).
 
We quantify the risk associated with an agent's trace by a loss function \(L : T \to \R\).
This formulation allows us to decouple risk, which depends on the agent's overall behavior, from task accuracy, which only depends on the final output.
Thus, the loss function can encode desiderata of the agent's behavior like safety, privacy, and fairness. 
For example, in the DroneNav environment from~\exampledronenav, the loss function quantifies how safe a control policy is on a trajectory of the drone.

Observe that given a distribution over the inputs \(\dist(X)\), the agent \(f\) induces distributions \(\dist_f(T)\) and \(\dist_f(Y)\) over the set of traces and outputs respectively. 
We can sample \(t \sim \dist_f(T)\) or \(y \sim \dist_f(Y)\) by first sampling \(x \sim \dist(X)\), then sampling \((t, y) \sim f(x)\), and dropping one of the two components.
Evaluating the loss on the distribution of traces gives us a distribution over the losses the agent incurs. 
Since we are concerned with risk minimization, we focus on analyzing tail behaviors of the loss distribution. 
\change{In particular, we are interested in optimizing the following risk measures: the \textit{value-at-risk} at level \(\alpha \in (0, 1)\), denoted \(\var_\alpha\) and the \textit{conditional value-at-risk}, denoted \(\cvar_\alpha\).} 
\(\var_\alpha\) corresponds to the \((1-\alpha)\)-quantile of the distribution. 
In other words, a \((1-\alpha)\) fraction of the traces will have a loss smaller than the \(\var_\alpha\). 
To formally define it, let \(Z_f \sim \dist_f(T)\) be a random variable over the traces of the agent \(f\).
Then we can express
\begin{equation}
  \var_\alpha[L(Z_f)] \coloneqq \quantile(L(Z_f), 1-\alpha) = \inf\left\{q \in \R : \prob[L(Z_f) \leq q] \geq 1 - \alpha\right\}.
\end{equation}
\change{On the other hand, \(\cvar_\alpha\), a coherent risk measure~\cite{artzner:crm}, is more sensitive to the tail loss distribution and is defined as the expected tail loss:
\begin{equation}\label{eq:def-cvar}
  \cvar_\alpha[L(Z_f)] \coloneqq \frac{1}{\alpha}\int_{0}^{\alpha}\var_\gamma[L(Z_f)]d\gamma = \expectation[L(Z_f) \mid L(Z_f) \geq \var_\alpha[L(Z_f)]].
\end{equation}}

We now shift our attention to compositions of agents. Agents can be composed sequentially to complete long-horizon objectives. 
A composition of agents sequentially transforms inputs in which the output of one agent becomes the input for the next agent. 
Given a long-horizon objective, we model all feasible agent compositions that achieve the objective as a directed acyclic graph called an \textit{agent graph}.

\begin{definition}[Agent graph]
  An agent graph \(G\) is a tuple \(\agentgraph\) with the following components: 
  \begin{enumerate}
    \item A set of vertices \(V\) and edges \(E \subseteq V \times V\) that form a directed acyclic graph.
    \item \(X\) associates a domain \(X_v\) to each vertex \(v \in V\). 
    \item To each edge \(e = (u, v) \in E\), \(T\) associates a trace set \(T_e\), \(F\) associates an agent \(f_e : X_{u} \to \dists(T_e \times X_{v})\), and \(L\) associates a loss function \(L_e : T_e \to \R\).
    \item \(s \in V\) denotes the source vertex and \(\dist_s\) is the initial input distribution over the domain \(X_s\).
    \item \(t \in V\) denotes the terminal vertex and \(X_t\) represents the output domain of the long-horizon objective.
  \end{enumerate} 
\end{definition}

Consider an agent graph \(G\). 
Let \(\mathcal{P} \subseteq V^*\) be the set of directed paths from \(s\) to \(t\).
Consider \(p = v_1 \xrightarrow{e_1} \dots \xrightarrow{e_m} v_{m+1} \in \mathcal{P}\) with \(v_1 = s\) and \(v_{m+1} = t\), and \(e_1, \dots, e_m\) the directed edges along the path. 
Equipped with the initial input distribution \(\dist_s\), the sequence of agents \((f_{e_1}, \dots, f_{e_{m}})\) induces a joint distribution \(\dist(T_{e_1} \times \cdots \times T_{e_m})\) which we denote \(\dist_p\). 
We can sample \((t_1, \dots, t_{m}) \sim \dist_p\) by sampling \(x_1 \sim X_s\), followed by \((t_1, x_2) \sim f_{e_1}(x_1)\), \((t_2, x_3) \sim f_{e_2}(x_2)\), until \((t_m, x_{m+1}) \sim f_{e_{m}}(x_{m})\). 
We refer to a sample from \(\dist_p\) as a \textit{sequence of traces} and define the random variable \(Z_p \sim \dist_p\).

We also define a composed loss function along the path \(L_p : T_{e_1} \times \cdots \times T_{e_{m}} \to \R\) as 
\begin{equation}\label{eq:path-loss}
  L_p(t_1, \dots, t_m) \coloneqq \max_{i}\{L_{e_i}(t_i)\}
\end{equation}
which is the maximum of losses incurred by the sequence of trajectories \((t_1, \dots, t_{m})\). 

We now have all the ingredients to state our risk minimization objective over an agent graph (\wormobj).
\begin{objective}[RMAG]\label{obj:worm}
  Let \(G = \agentgraph\) be an agent graph and let \(\mathcal{P} \subseteq V^*\) be the set of directed paths from \(s\) to \(t\).
  For a given risk level \(\alpha \in (0,1)\) \change{and risk measure \(\rho \in \{\var_\alpha, \cvar_\alpha\}\)} we define the risk minimization objective as the following optimization problem
  \begin{equation}
    \label{eq:worm}
    \argmin_{p \in \mathcal{P}} \change{\rho[L_p(Z_p)]}.
  \end{equation}
\end{objective}

Rephrasing in words, the optimization objective corresponds to finding a path from \(s\) to \(t\) in the agent graph \(G\) that minimizes the risk measure of the \textit{maximum} of the losses incurred by the agents along the path.

\section{Finding the path minimizing risk}\label{sec:algo}

\renewcommand{\thealgorithm}{\bucketedvar}
\begin{algorithm}[t]
    \caption{: Efficient algorithm for the~\wormobj-\var~objective}\label{alg:bucketed-var}
    \begin{algorithmic}[1]
        \Require \(G = \agentgraph\): agent graph; $d \in \N_{>0}$: number of buckets; $n$: sample size; risk budget $\alpha \in (0,1)$
        \State $B \gets \{0, \frac{\alpha}{d}, 2\frac{\alpha}{d}, \dots, \alpha\}$ \Comment{Set of risk level buckets}
        \State $\varvar : (V \times B \to \R) \gets$ init empty map \Comment{Map to store~\var~estimates}
        \State \(\bestpaths : (V \times B \to V^*) \gets\) init empty map \Comment{Map to store best partial paths}
        \State \(\bestsamples : (V \times B \to (X_v)^n) \gets\) init empty map \\ \Comment{Map to store samples along best partial path}
        \State \(\varvar[s, \_] \gets -\infty; \bestpaths[s, \_] \gets [0, 0]\) \Comment{Base case: no risk at source}
        \State \(\bestsamples[s, \_] \gets [x_1, \dots, x_n]\) where \(x_1, \dots, x_n \sim \dist_s\) \Comment{Draw input samples}
        \State $O \gets$ topological ordering on $V$ excluding $s$
        \For{vertex \(v\) in the order \(O\) and \(\bar{\alpha} \in B\)}
          \State $\texttt{pred} \gets$ predecessor vertices of $v$ \\\Comment{All predecessors have been processed due to topological order}
          \State $\varvar[v, \bar{\alpha}] \gets \infty$
          \For{$v' \in \texttt{pred}$ and $\alpha' \in B_{\leq \bar{\alpha}}$}\Comment{\(B_{\leq \bar{\alpha}} \coloneqq \{q \in B : q \leq \bar{\alpha}\}\)}
            \State \(f \gets f_{(v, v')}; L \gets L_{(v, v')}\)
            \State \(\{x_1, \dots, x_n\} \gets \bestsamples[v', \alpha']\)
            \State \(\{(t_1, y_1), \dots, (t_n, y_n)\} \gets \{(t_1, y_1) \sim f(x_1), \dots, (t_n, y_n) \sim f(x_n)\}\) \\\Comment{Sample trace-output pairs along edge}
            \State \(\losses \gets \{L(t_1), \dots, L(t_n)\}\)
            \State $\texttt{edgeVaR} \gets$ \(\quantile(\losses, 1-(\bar{\alpha} - \alpha'))\) \Comment{Empirical \((1-(\bar\alpha - \alpha'))\)-quantile}
            \State $\texttt{pathVaR} \gets \max(\texttt{VaR}[v', \alpha'], \texttt{edgeVaR})$ \Comment{VaR along the path}
            \If{$\texttt{pathVaR} < \varvar[v, \bar{\alpha}]$}
                \State $\varvar[v, \bar{\alpha}] \gets \texttt{pathVaR}$ \Comment{Update best estimate}
                \State \(\bestpaths[v, \bar{\alpha}] \gets \texttt{append}(\bestpaths[v', \alpha'], v)\)
                \State \(\bestsamples[v, \bar{\alpha}] \gets \{y_1, \dots, y_n\}\)
            \EndIf
          \EndFor
        \EndFor
        \State \textbf{return} $\varvar[t,\alpha], \bestpaths[t, \alpha]$ \Comment{Path with minimum VaR estimate}
    \end{algorithmic}
\end{algorithm}

\change{We first consider the~\wormobj~objective with the~\var~risk measure and then show we can recover the \(\cvar\) through our procedure.}
A simple procedure to find the optimal path is to estimate the \(\var_\alpha\) along each path and pick the path that minimizes it. 
We can estimate the \(\var_\alpha\) by computing the empirical \((1-\alpha)\)-quantile on samples drawn from the loss along a path \(L_p(Z_p)\).
While this is asymptotically optimal, it can be inefficient when the graph has an exponential number of paths as a function of the number of vertices.

To circumvent this, we present an efficient algorithm that only scales polynomially in the number of vertices in the agent graph and is also asymptotically near-optimal under certain assumptions.
\bucketedvar~is a dynamic programming algorithm that processes the vertices in the agent graph in the topological order and incrementally builds a near-optimal path.

To understand how it works, we first present how we can estimate the value-at-risk of the loss distribution of a single path incrementally.
To this end, consider a path in an agent graph \(p = v_1 \xrightarrow{e_1} \dots \xrightarrow{e_m} v_{m+1}\) with \(v_1\) being the source vertex and \(v_{m+1}\) the target vertex. 
Let \((Z_1, \dots, Z_m) \sim \dist(T_{e_1} \times \cdots \times T_{e_m})\) be a sequence of random variables representing the sequences of traces of the agents along the path.
Then define \(R_1, \dots, R_m\) to be real-valued random variables representing the losses incurred by the agents along each edge. Formally, we can write \(R_1 \coloneqq L_{e_1}(Z_1), \dots, R_m \coloneqq L_{e_m}(Z_m)\).
If \(L_p\) is the composed loss function along the path as defined in~\Cref{eq:path-loss}, we can rewrite \(L_p(Z)\) as \(\max(R_1, \dots, R_m)\).
A key observation that enables the incremental estimation of \(\var_\alpha[L_p(Z)]\) (or equivalently, the \((1-\alpha)\)-quantile of \(L_p(Z)\)) is that we can estimate \(\var_{\alpha_i}[R_i]\) of each loss variable individually with \(\alpha_i\) chosen such that \(\sum \alpha_i = \alpha\). 
Following this, we can apply the union bound to obtain
\begin{equation}
    \var_\alpha[L_p(Z)] = \var_\alpha[\max(R_1, \dots, R_m)] \leq \max_i\{\var_{\alpha_i}[R_i]\}.
\end{equation}
This bound is justified in the proof of~\Cref{thm:correctness-bucketed-var}.
This method allows us to estimate the value-at-risk of each edge loss variable independently and then combine the estimates at the end by computing their maximum.
However, we need to optimize the allocation of the risk budgets \(\alpha_i\) along each edge. 
Intuitively, allocating a higher risk budget to higher edge losses would result in a lower value-at-risk for that edge since it would correspond to a lower quantile.

\bucketedvar~optimizes this by searching for risk budget allocations in a discretized set of \textit{buckets} \(B = \{0, \frac{\alpha}{d}, 2\frac{\alpha}{d}, \dots, \alpha\}\) for a given discretization factor \(d \in \N_{>0}\). 
At each vertex \(v\) and corresponding to each bucket \(\bar{\alpha} \in B\), the algorithm inductively builds an optimal path that minimizes the upper bound on the value-at-risk along the partial path at a risk budget of \(\bar{\alpha}\). 
It achieves this by considering each predecessor \(v'\) of \(v\) and each allocation of budget \(\alpha' \leq \bar{\alpha}\) until \(v'\), for which we have inductively constructed the optimal path, and then estimates the value-at-risk by allocating a budget of \(\bar{\alpha} - \alpha'\) to the loss along the edge \(v' \to v\). 
The optimal path for \((v, \bar{\alpha})\) is then obtained by extending the optimal path to \((v', \alpha')\) that minimizes the value-at-risk estimate. 
A detailed proof of soundness and run-time analysis of the algorithm can be found in the proof of~\Cref{thm:correctness-bucketed-var}

\begin{theorem}\label{thm:correctness-bucketed-var}
Consider \(G = \agentgraph\) an agent graph, \(d \in \N_{>0}\) the number of buckets, \(n \in \N_{>0}\) the sample size, and \(\alpha \in (0, 1)\) the risk budget.
Then, let \(q \in \R\) and \(p \in V^*\) be the value-at-risk estimate and path returned by \(\bucketedvar(G, d, n, \alpha)\).
Then, for all \(\delta \in (0, 1)\), with probability at least \(1-\delta\), 
\begin{equation}
    q \geq \quantile(L_p(Z_p), 1-\alpha-\gamma), \textnormal{ with } \gamma = \abs{V}\sqrt{\frac{1}{2n}\ln\left(\frac{2{(d+1)}^2\abs{V}^2}{\delta}\right)}
\end{equation}
Furthermore, the time complexity of \(\bucketedvar(G, d, n, \alpha)\) is \(O(n(d+1)^2\abs{V}^2)\) assuming that sampling from an agents' output distributions and the initial distribution \(\dist_s\) incur constant costs.
\end{theorem}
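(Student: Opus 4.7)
The plan is to combine a DKW uniform-convergence argument with a structural decomposition of the returned estimate \(q\) as a max over empirical edge quantiles. By induction on the topological order, unrolling the recursion \(\texttt{pathVaR} \gets \max(\varvar[v',\alpha'], \texttt{edgeVaR})\) shows that the returned \(q\) satisfies
\[
q = \max_{i=1,\dots,m} \hat q_i,
\]
where \(p = v_1\xrightarrow{e_1}\cdots\xrightarrow{e_m}v_{m+1}\) (with \(v_1=s\), \(v_{m+1}=t\)) is the returned path, \(0=\bar\alpha_0<\bar\alpha_1<\cdots<\bar\alpha_m=\alpha\) are the buckets chosen along \(p\), \(\alpha_i := \bar\alpha_i-\bar\alpha_{i-1}\) (so \(\sum_i\alpha_i = \alpha\)), and \(\hat q_i\) is the empirical \((1-\alpha_i)\)-quantile of \(n\) samples of \(R_i := L_{e_i}(Z_i)\). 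Since \(\bestsamples[v_i,\bar\alpha_{i-1}]\) is obtained by forward propagation from \(\dist_s\) along \(p\), these are i.i.d.\ draws from the true marginal of \(R_i\) along \(p\), which is what allows the empirical quantile to play the role of a quantile estimator for \(R_i\).

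Next, I would apply the Dvoretzky--Kiefer--Wolfowitz inequality to each sample-based CDF. For one fixed edge-bucket computation, DKW gives \(\prob[\sup_x|\hat F_n(x)-F(x)|>\epsilon] \le 2e^{-2n\epsilon^2}\). The algorithm performs at most \((d+1)^2|V|^2\) quantile computations, one per tuple \((v,\bar\alpha,v',\alpha')\). A union bound with per-computation failure probability \(\delta/((d+1)^2|V|^2)\) gives, simultaneously across all such computations,
\[
\sup_x|\hat F_n(x)-F(x)| \le \epsilon := \sqrt{\tfrac{1}{2n}\ln\tfrac{2(d+1)^2|V|^2}{\delta}}
\]
with probability at least \(1-\delta\). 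The subtlety that I expect to be the main obstacle is that the returned \(p\) and allocations \(\alpha_i\) are themselves data-dependent, which forces the union bound to range over every quantile the algorithm could consult, not merely those along the eventually returned path; without this, the argument would be invalid.

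Conditioning on this good event, for each \(i\) the empirical quantile property \(\hat F_n(\hat q_i) \ge 1-\alpha_i\) combined with the DKW bound gives \(F(\hat q_i) \ge 1-\alpha_i-\epsilon\), hence \(\prob[R_i>\hat q_i] \le \alpha_i+\epsilon\). Because \(\hat q_i \le q\) for every \(i\), we have \(\{L_p(Z_p)>q\} = \bigcup_i \{R_i>q\} \subseteq \bigcup_i \{R_i>\hat q_i\}\), and a final union bound over \(i\) yields
\[
\prob[L_p(Z_p)>q] \le \sum_{i=1}^m(\alpha_i+\epsilon) \le \alpha + |V|\epsilon = \alpha+\gamma,
\]
which is equivalent to \(q \ge \quantile(L_p(Z_p),1-\alpha-\gamma)\). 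For the complexity bound, a direct counting argument suffices: there are \(|V|(d+1)\) outer states \((v,\bar\alpha)\), each with at most \(|V|(d+1)\) inner \((v',\alpha')\) pairs, and each inner iteration performs \(O(n)\) sampling and empirical-quantile work, giving the stated \(O(n(d+1)^2|V|^2)\).
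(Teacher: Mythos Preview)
Your proposal is correct and follows essentially the same approach as the paper: decompose \(q\) as a max of per-edge empirical quantiles along the returned path, apply DKW with a union bound over all \((d+1)^2|V|^2\) quantile computations to handle data-dependence, and then union-bound over the edges to get \(\prob[L_p(Z_p)>q]\le \alpha + m\epsilon \le \alpha+|V|\epsilon\). If anything, your write-up is more explicit than the paper's own proof about the data-dependence of \(p\) and the \(\alpha_i\), and about the step from \(\hat F_n(\hat q_i)\ge 1-\alpha_i\) to \(\prob[R_i>\hat q_i]\le \alpha_i+\epsilon\); the only cosmetic nit is that the chosen buckets need not be strictly increasing (some \(\alpha_i\) may be \(0\)), but this does not affect the argument.
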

\begin{proof}[Proof sketch.]
Observe that since \(\gamma \to 0\) as \(n \to \infty\), the theorem tells us that the estimated value-at-risk \(q\) is at least as large as the true value-at-risk of the loss distribution of the path \(p\), which corresponds to \(\quantile(L_p(Z_p), 1-\alpha)\).
We can show this using a union bound argument for the budget allocations found by the algorithm along with the fact that the empirical CDF converges uniformly to the true CDF by the DKW inequality~\citep{dkw1,dkw2}.
The time complexity is easily deduced by looking at the pseudocode of~\bucketedvar. 
See~\Cref{sec:proofs-algo} for the complete proof.
\end{proof}

\Cref{thm:correctness-bucketed-var} only justifies the validity of the union bounding method for estimating the value-at-risk but we have not discussed the optimality of~\bucketedvar. 
Here, we can only hope to achieve tight estimates of the value-at-risk when the loss variables along every path are independent or are only loosely correlated. 
Under these independence assumptions,~\Cref{thm:optimality} tells us that asymptotically, the quantile estimated by the~\bucketedvar~algorithm is at most the \((1-\alpha+\alpha^2/2)\)-quantile of the losses along the \textit{optimal} path.
In other words, the path found by our algorithm is suboptimal by an additive factor of \(\alpha^2/2\) with respect to the quantiles of the optimal path loss.
\begin{theorem}\label{thm:optimality}
Let \(G, d, n\), and \(\alpha\) be defined as in~\Cref{thm:correctness-bucketed-var}.
Further, assume that the losses along each edge of the agent graph are given by independent real-valued random variables \(R_e\) for each \(e \in E\) such that for all directed paths \(p\), we can write \(Z_p = (R_{e_1}, \dots, R_{e_m})\).
Now, let \((q, p) = \bucketedvar(G, d, n, \alpha)\) and let \(p^*\) be the path optimizing the~\wormobj~objective.
Then, for all \(\delta \in (0, 1)\), with probability at least \(1-\delta\),
\begin{equation}
    q 
    \underset{n, d \to \infty}{\leq} \quantile\left(L_{p^*}\left(Z_{p^*}\right), 1-\alpha + \frac{\alpha^2}{2}\right)
\end{equation}    
\end{theorem}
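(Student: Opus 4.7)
The plan is to combine asymptotic convergence of the empirical quantiles used inside \bucketedvar~with a deterministic algebraic bound that turns the independence-induced product inequality into an additive sum-of-budgets constraint. Let $p^*$ be an optimal path for the \wormobj~objective, with edges $e_1^*, \dots, e_m^*$, and set $Q \coloneqq \quantile(L_{p^*}(Z_{p^*}), 1 - \alpha + \alpha^2/2)$. The goal is to show that as $n, d \to \infty$, the algorithm's estimate $q$ is bounded above by $Q$ with probability at least $1-\delta$.

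\textbf{Reduction to a real-valued allocation on $p^*$.} A straightforward induction on the topological order shows that $\varvar[t,\alpha]$ equals the minimum, over all paths $p \in \mathcal{P}$ and all bucketed allocations $(\alpha_1, \dots, \alpha_k)$ along the edges of $p$ with $\alpha_i \in B$ and $\sum_i \alpha_i = \alpha$, of the maximum empirical $(1-\alpha_i)$-quantile of the $i$-th edge's loss. Because \bucketedvar~computes only $O((d+1)^2|V|^2)$ empirical quantiles, the DKW inequality and a union bound (exactly as in \Cref{thm:correctness-bucketed-var}) imply that, with probability at least $1-\delta$, every one of these empirical quantiles is within $O(\sqrt{\log(1/\delta)/n})$ of the corresponding true quantile. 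As $n \to \infty$ this estimation error vanishes, and as $d \to \infty$ the bucket set $B$ becomes dense in $[0,\alpha]$. Restricting the outer minimum to the particular path $p^*$, it therefore suffices to exhibit a real-valued allocation $(\beta_1, \dots, \beta_m)$ along the edges of $p^*$ with $\sum_i \beta_i \leq \alpha$ and $\max_i \var_{\beta_i}[R_{e_i^*}] \leq Q$.

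\textbf{The allocation and its feasibility.} Define $\beta_i \coloneqq \prob[R_{e_i^*} > Q]$. By construction $\var_{\beta_i}[R_{e_i^*}] \leq Q$ for every $i$, so the max condition holds. By independence of $R_{e_1^*}, \dots, R_{e_m^*}$,
\begin{equation*}
  \prod_{i=1}^m (1 - \beta_i) \;=\; \prob\!\left[\max_i R_{e_i^*} \leq Q\right] \;\geq\; 1 - \alpha + \tfrac{\alpha^2}{2}.
\end{equation*}
Combining the elementary bound $\prod_i(1-\beta_i) \leq e^{-\sum_i \beta_i}$ (from $1-x \leq e^{-x}$) with the inequality $1 - \alpha + \alpha^2/2 \geq e^{-\alpha}$ (verified by noting that $h(\alpha) \coloneqq 1 - \alpha + \alpha^2/2 - e^{-\alpha}$ satisfies $h(0)=0$, $h'(0)=0$, and $h''(\alpha) = 1 - e^{-\alpha} \geq 0$), we obtain $e^{-\sum_i \beta_i} \geq e^{-\alpha}$, i.e., $\sum_i \beta_i \leq \alpha$. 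This produces the required real-valued allocation on $p^*$, and pushing through the asymptotic reduction from the previous paragraph gives $q \leq Q$.

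\textbf{Main obstacle.} The heart of the argument is the algebraic bridge in the last paragraph: converting the product lower bound imposed by independence into the additive budget constraint $\sum_i \beta_i \leq \alpha$, while sacrificing only $\alpha^2/2$ in the target quantile. This is exactly what pins down the constant appearing in the theorem. A secondary technical point is justifying the $d \to \infty$ limit cleanly when the edge-loss CDFs are not continuous, so that the bucketed $\alpha_i$ can be chosen to realize the ideal $\beta_i$ in the limit; this can be handled by an upper-semicontinuity argument on the quantile function, or by a mild continuity assumption on the loss distributions.
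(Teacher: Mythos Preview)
Your proposal is correct and follows essentially the same approach as the paper: both use DKW-based convergence of empirical quantiles as $n \to \infty$, vanishing discretization error as $d \to \infty$, and the identical algebraic chain $\prod_i(1-\beta_i) \leq e^{-\sum_i \beta_i}$ combined with $1-\alpha+\alpha^2/2 \geq e^{-\alpha}$ (verified via the very same function $h(\alpha)=1-\alpha+\alpha^2/2-e^{-\alpha}$). The only organizational difference is that you construct the feasible allocation $\beta_i=\Pr[R_{e_i^*}>Q]$ directly from the target quantile $Q$, whereas the paper starts from an ``optimal allocation'' asserted to equalize the per-edge quantiles at some $q^*$ and then bounds $q^*\leq Q$; your formulation is slightly cleaner in that it avoids having to justify the existence of such an equalizing allocation.
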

\begin{proof}[Proof sketch.]
    This is again a consequence of the uniform convergence of the empirical CDF along with the fact that the independence of the loss variables allows us to deduce an upper bound. 
    See~\Cref{sec:proofs-algo} for the complete proof.
\end{proof}

\change{Equipped with the~\bucketedvar~algorithm to approximate \(\var_\alpha\) over the agent graph, we show how we can also approximate the conditional value-at-risk \(\cvar_\alpha\) using the quantities computed by the algorithm. From the expression for \(\cvar_\alpha\) in~\Cref{eq:def-cvar}, it is easy to see that we can approximate the integral as
\begin{equation}
    \cvar_\alpha[L_p(Z)] \approx \frac{1}{d}\sum_{k=1}^d \var_{k\frac{\alpha}{d}}[L_p(Z)].
\end{equation}
But notice that the~\bucketedvar~algorithm has already optimized values of \(\var_{k\frac{\alpha}{d}}[L_p(Z)]\) over the agent graph for \(1 \leq k \leq d\) and so it suffices to take the average of these values. Finally, we observe that increasing the sample size and the number of buckets also improves the approximation of \(\cvar_\alpha\).}

\section{Experimental evaluation}\label{sec:evals}

\subsection{Benchmarks}

We implement
the~\bucketedvar~algorithm for risk minimization and evaluate its performance on a suite of discrete and continuous control benchmarks that require composing agents trained with reinforcement learning (RL) to complete long-horizon objectives. 
In all the benchmarks that we consider, the agent graphs \(G = \agentgraph\) share the following structure: the domain associated with each vertex \(X_v\) is the state space of the control environment \(\Sigma\) and the agents along each edge are control policies \(\pi_e : \Sigma \to \dists(\Sigma^* \times \Sigma)\) that take an initial state and produce a distribution over trajectories and final states. 
These final states then become the initial states for the next policy. Lastly, \(\dist_s\) represents the distribution over the initial states of the environment.


In the first set of benchmarks, the objective is to \textit{maximize safety during navigation and manipulation}. 
We adapt the 16-Rooms and Fetch environments from \citet{jothimurugan21:dirl} in which the long-horizon objective is specified as a \textit{task graph}. 
In a task graph, each edge represents a \textit{reach-avoid} subtask: reaching a target region while avoiding a set of dangerous states. To adapt this into an agent graph, we define loss functions for each edge subtask as the negative of the cumulative reward function which captures the reach-avoid specification. 
Additionally, we adapt the 16-Rooms environment to implement a version of the drone navigation task from~\exampledronenav. 
Secondly, we implement the BoxRelay benchmark using the Miniworld~\citep{boisvert:miniworld} framework in which we seek to \textit{minimize resource consumption}.
The objective is to move the player around while picking up boxes and dropping them at different locations.
We add the constraint that the player has limited battery power but gets recharged each time a box is dropped off at a target.
Importantly, since the initial positions of the player, the boxes, and the final target are all stochastic, this is a non-trivial optimization problem. 
To model the BoxRelay task as an agent graph, we define loss functions that evaluate to the number of time steps that a box is carried before being put back down and thus quantifies resource consumption.
Complete benchmark descriptions are included in~\Cref{sec:benchmarks-desc-evals}.


\subsection{Experimental results}

Equipped with these benchmark environments, we conduct three sets of experiments to evaluate the effectiveness of the approximation obtained from the~\bucketedvar~algorithm.
We measure the accuracy of the approximation by estimating the empirical quantile on \(10^4\) fresh samples of losses along the optimal path. 
Specifically, if \(c \in \R\) is the approximate \(\var_\alpha\) returned by the~\bucketedvar~algorithm, we draw samples of losses along the optimal path and compute the fraction of these which are below \(c\) giving us the \textit{empirical quantile} of \(c\).
We say that the approximation is tight if the empirical quantile is close to the desired quantile \(1-\alpha\). 
\change{We also use empirical quantiles to compare approximations of \(\cvar_\alpha\).}

\begin{figure}
    \centering
    \begin{subfigure}{0.4\linewidth}
        \centering
        \includegraphics[width=0.9\linewidth]{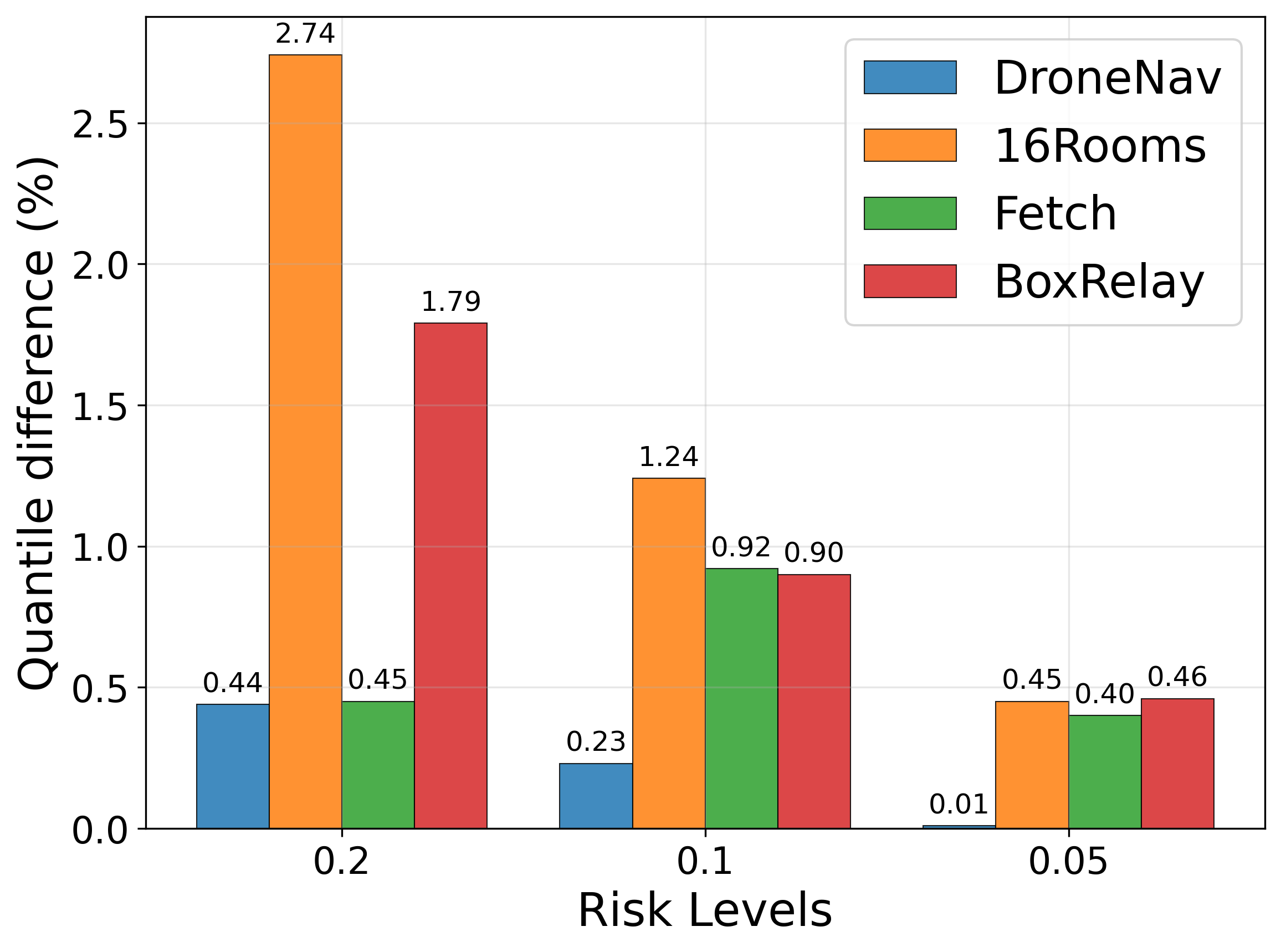}
        \label{fig:coverage-diffs}
    \end{subfigure}
    \begin{subfigure}{0.4\linewidth}
        \centering
        \includegraphics[width=0.9\linewidth]{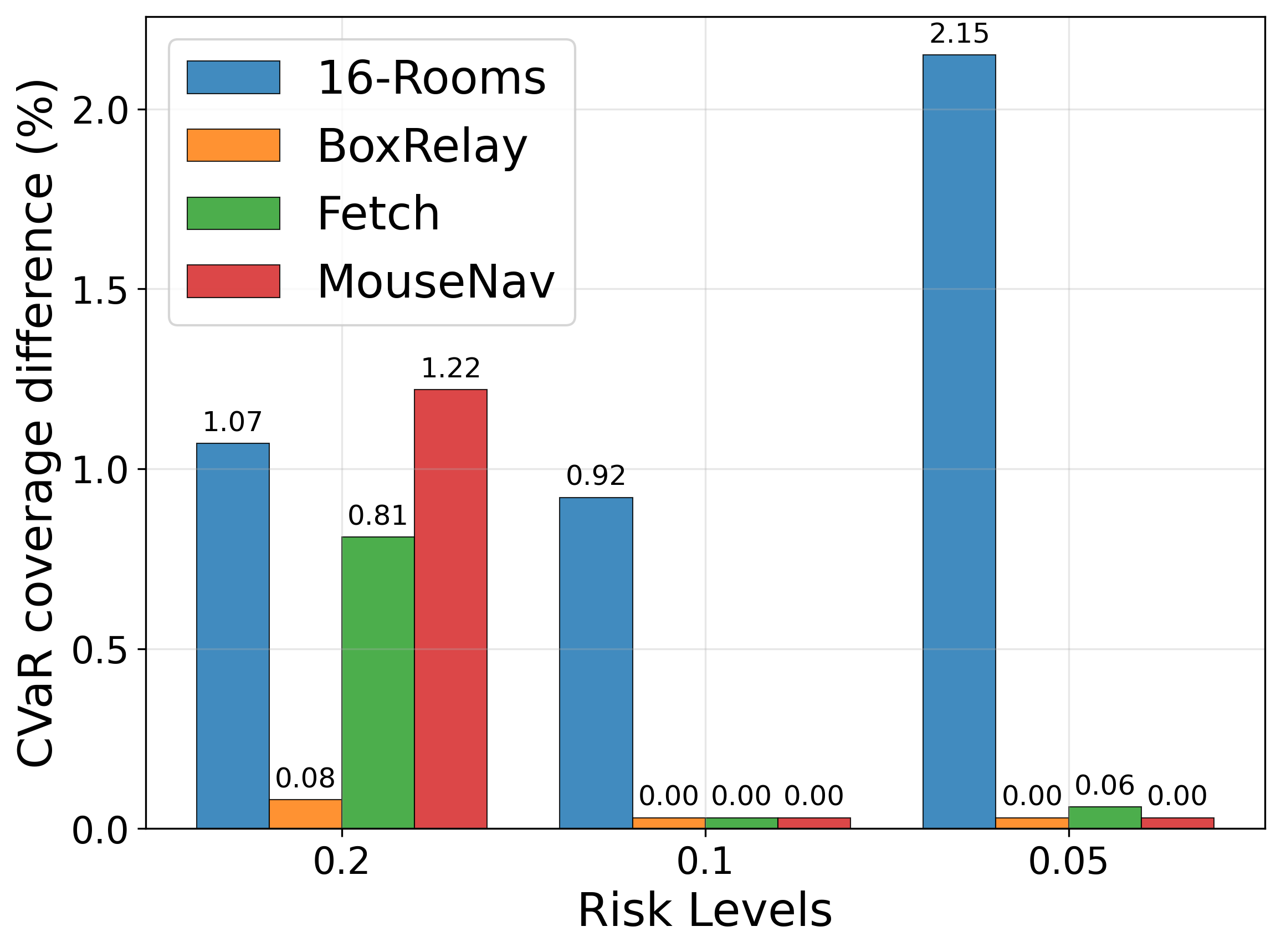}
        \label{fig:coverage-diffs-cvar}
    \end{subfigure}
    \caption{(a) Absolute quantile difference in percentage between the quantile computed by the~\bucketedvar~algorithm and the desired quantile. \change{(b) Absolute quantile difference in percentage between the approximate \(\cvar_\alpha\) and that estimated by the baseline.} The empirical quantiles are computed on a fresh set of \(10^4\) samples.}
    \label{fig:coverage-diffs-var-cvar}
\end{figure}

\textbf{Experiment 1: Comparison with optimal baseline algorithm.} We define a baseline algorithm that explicitly esimates \(\var_\alpha\) \change{and \(\cvar_\alpha\)} at the desired risk level \(\alpha\) along each path of the agent graph. 
While this algorithm is asymptotically optimal, it is inefficient as discussed in the previous section.
The number of samples for both algorithms is set to \(10^4\) and the number of buckets is chosen based on the number of agents to allow sufficient granularity in the allocation of the risk budget. 
For example, in the 16-Rooms benchmark since there are 8 agents along each path, we choose a higher number of buckets.

We find that~\bucketedvar~succeeds in finding the same optimal path as the baseline algorithm across all the benchmarks along with tight estimates of both \(\var_\alpha\) \change{and \(\cvar_\alpha\)}.
\Cref{fig:coverage-diffs-var-cvar} shows the absolute difference in percentage points of the quantile estimates obtained by the~\bucketedvar~algorithm and the baseline quantiles. 
We remark that the difference is never more than a couple percentage points across all benchmarks and risk levels.
It also finds non-trivial allocations of the risk budget for the agents: in the 16-Rooms benchmark, the budget allocations for the 8 agents along the path to estimate \(\var_{0.1}\) are \(16\bar{\alpha}, 0\bar{\alpha}, 10\bar{\alpha}, 23\bar{\alpha}, 19\bar{\alpha}, 11\bar{\alpha}, 7\bar{\alpha}\), and \(14\bar{\alpha}\) for \(\bar{\alpha} = 0.1/100\).

\begin{figure}
    \centering
    \begin{subfigure}{0.32\linewidth}
        \centering
        \includegraphics[width=\linewidth]{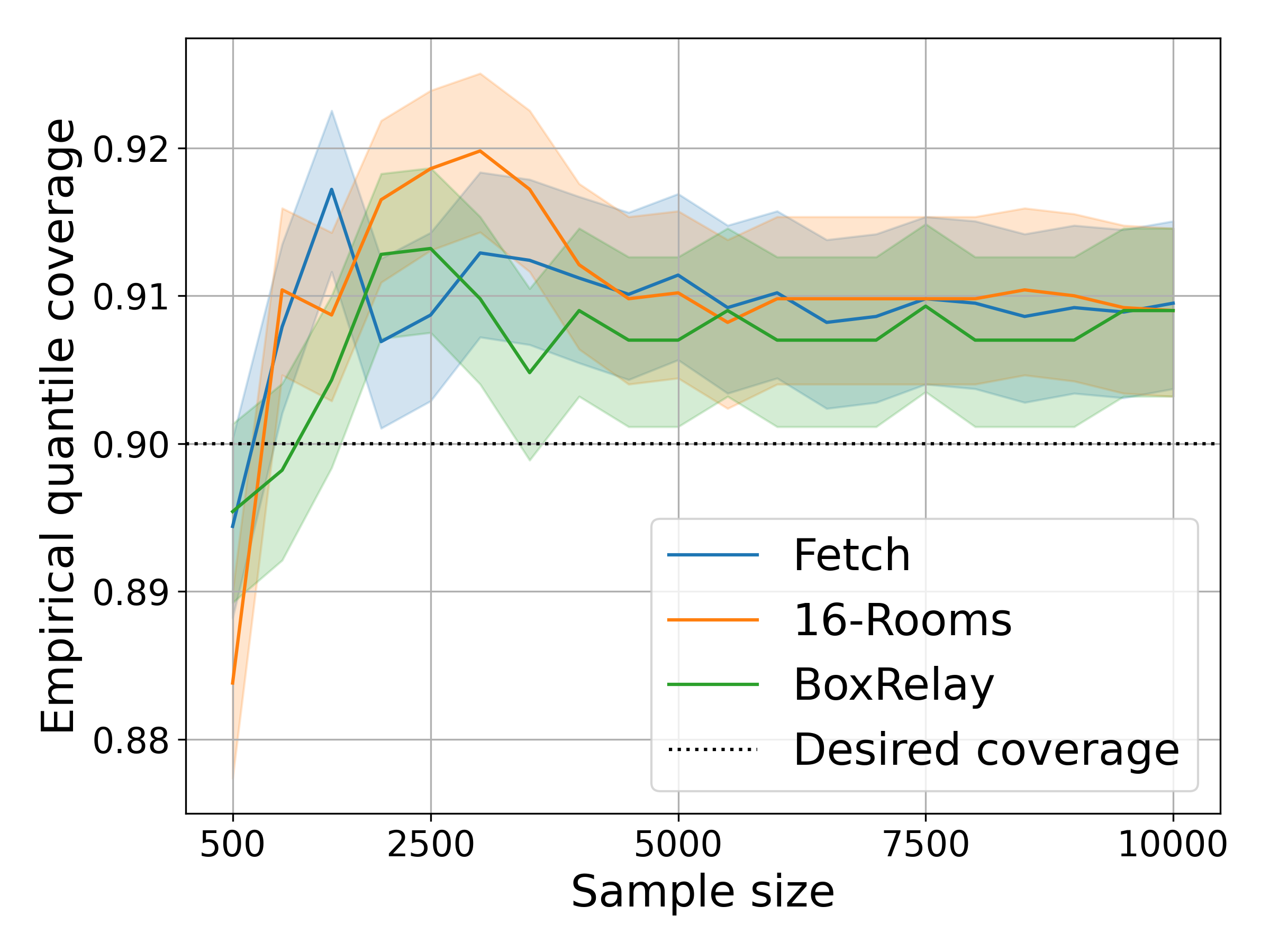}
        \caption{Varying sample size}
        \label{fig:plot-sample-size}
    \end{subfigure}
    \hfill
    \begin{subfigure}{0.32\linewidth}
        \centering
        \includegraphics[width=\linewidth]{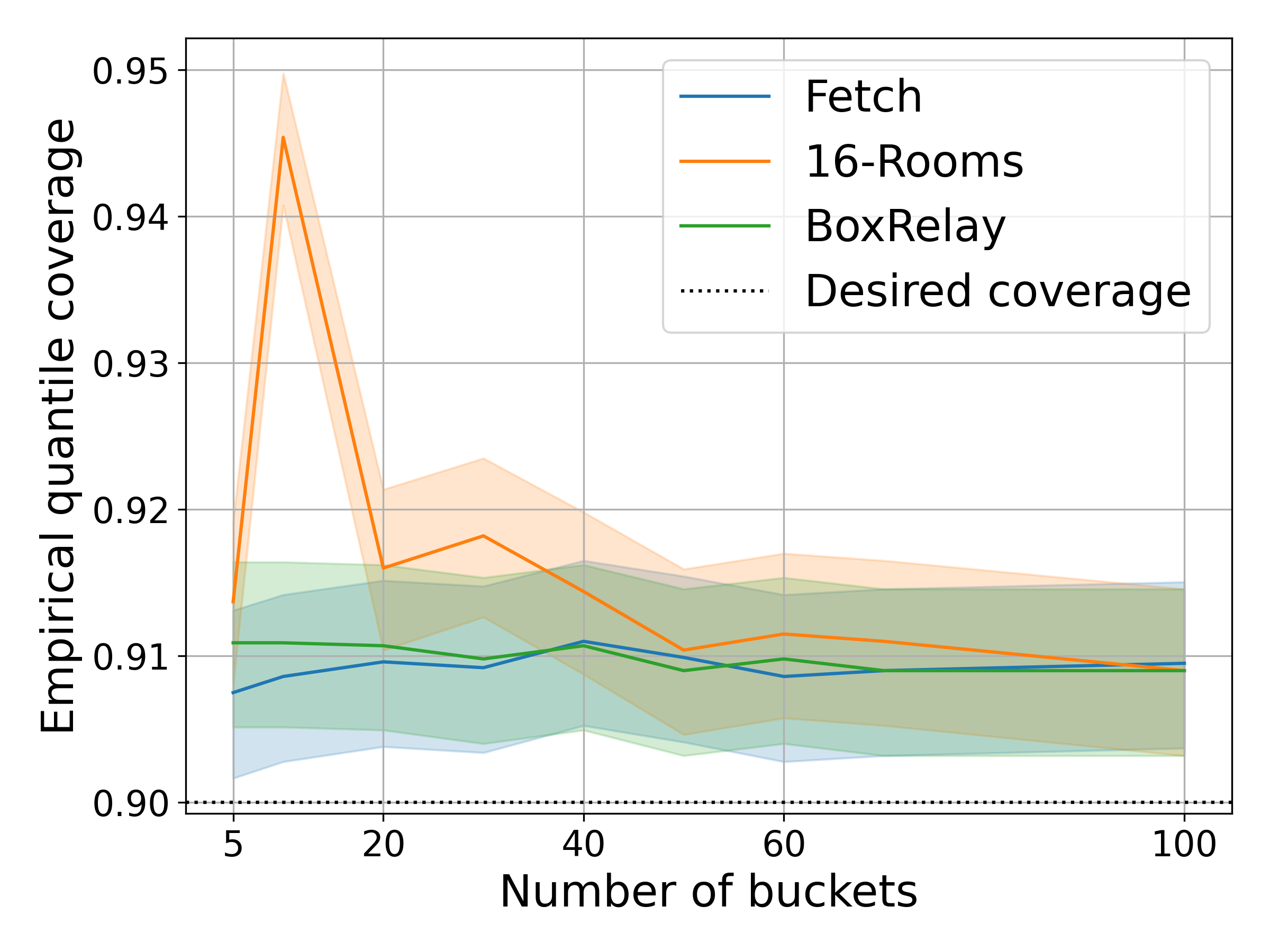}
        \caption{Varying number of buckets}
        \label{fig:plot-buckets}
    \end{subfigure}
    \hfill
    \begin{subfigure}{0.32\linewidth}
        \centering
        \includegraphics[width=\linewidth]{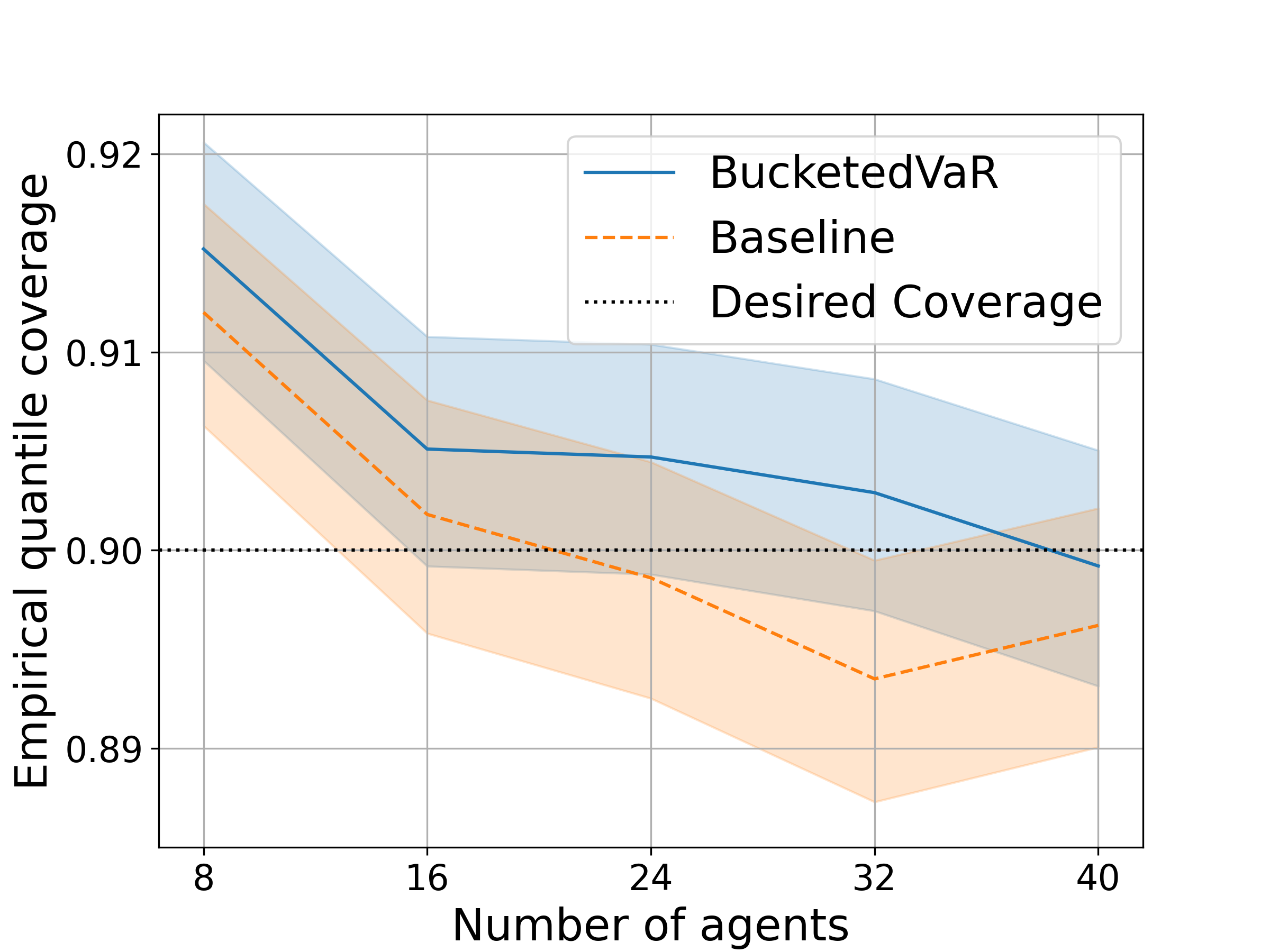}
        \caption{Varying number of agents in path}
        \label{fig:plot-agents}
    \end{subfigure}
    \caption{Empirical quantiles of the \(\var_{0.1}\) estimates computed by~\bucketedvar~with varying parameters. \(95\%\) Clopper-Pearson CIs for the empirical quantiles computed on \(10^4\) samples are also plotted.}
    \label{fig:plots}
\end{figure}

\textbf{Experiment 2: Increasing sample size and number of buckets.} 
Increasing the sample size improves the approximation because we have more accurate quantile estimates and similarly a higher number of buckets results in more granular risk budget allocations.
This is confirmed by the two experiments we run: in the first one, plotted in~\Cref{fig:plot-sample-size}, we maintain the number of buckets at 100 and vary the number of samples from 500 to \(10^4\) with \(\alpha=0.1\). 
The empirical quantile quickly stabilizes around 0.91 across benchmarks.
The second experiment maintains the sample size at \(10^4\) and varies the number of buckets from 5 to 100 with \(\alpha=0.1\).
\Cref{fig:plot-buckets} again shows a similar trend. We note that for the 16-Rooms benchmark, a low number of buckets gives bad~\var~estimates since the number of agents along each path is higher.

\textbf{Experiment 3: Scaling number of agents.}
From our theoretical result, we expect~\bucketedvar~to continue producing tight estimations of the~\var~when we increase the number of agents along the path, provided that the condition on the independence of losses holds and the sample size is large enough. 
We test this hypothesis by simulating a long path of agents using the 16-Rooms benchmark.
We consider the optimal path in this benchmark which has a sequence of 8 agents and construct a new agent graph with \(8k\) agents for \(k \in \{1,2,3,4,5\}\).
We can sample a sequence of trajectories from this path by taking \(k\) trajectories of the original 8 agents.
The losses are then computed using the original loss functions.
Since these are independently drawn trajectories, the assumption of independence holds.
We estimate the \(\var_\alpha\) of this path of agents using~\bucketedvar~for \(10^4\) samples, \(\alpha=0.1\), and \(100\) buckets.
The results are plotted in~\Cref{fig:plot-agents} in which we observe that the approximation continues to remain tight with an increasing number of agents.

\begin{figure}
    \centering
    \begin{subfigure}{0.4\linewidth}
        \centering
        \includegraphics[width=0.9\linewidth]{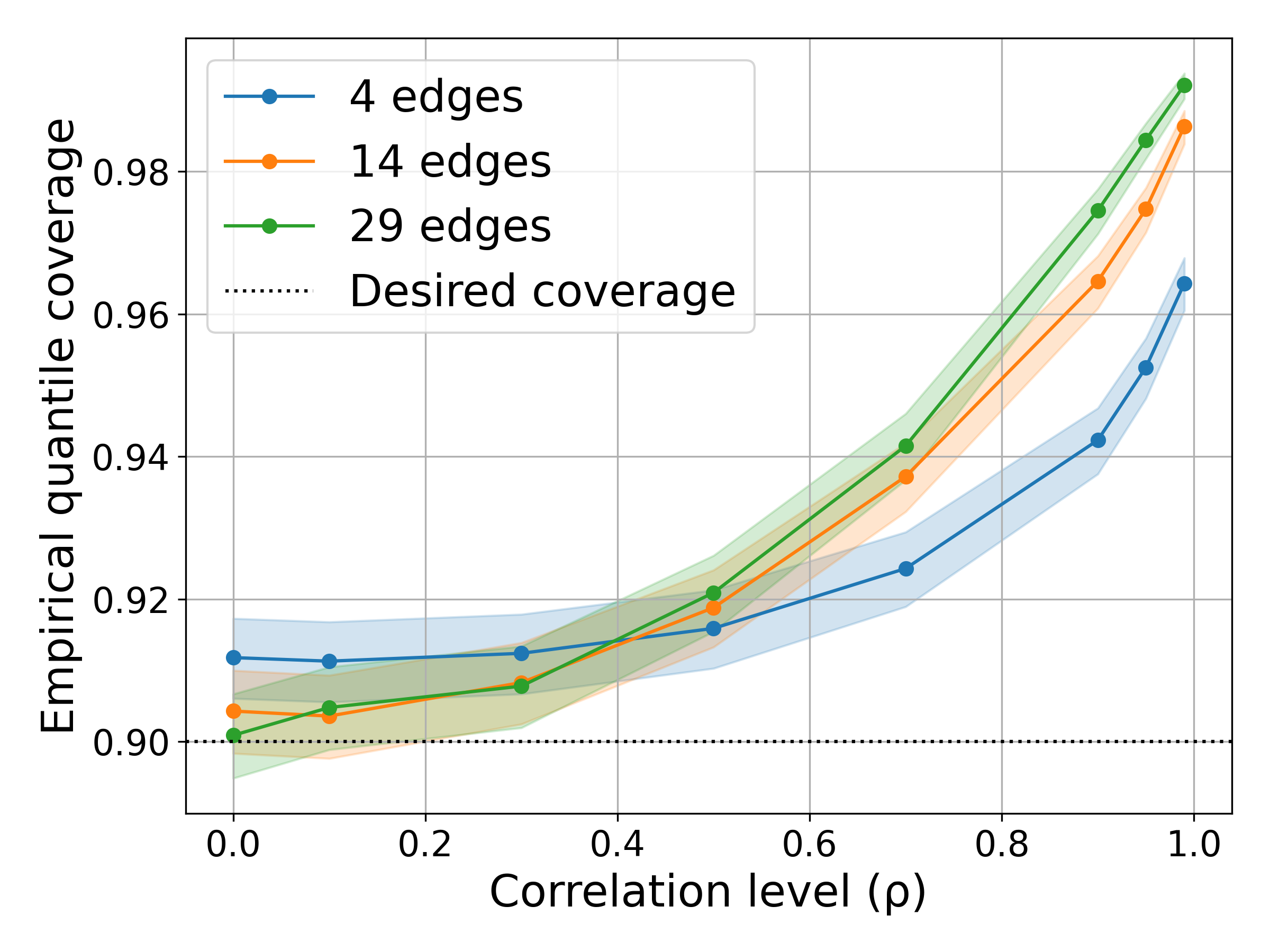}
    \end{subfigure}
    \begin{subfigure}{0.4\linewidth}
        \centering
        \includegraphics[width=0.9\linewidth]{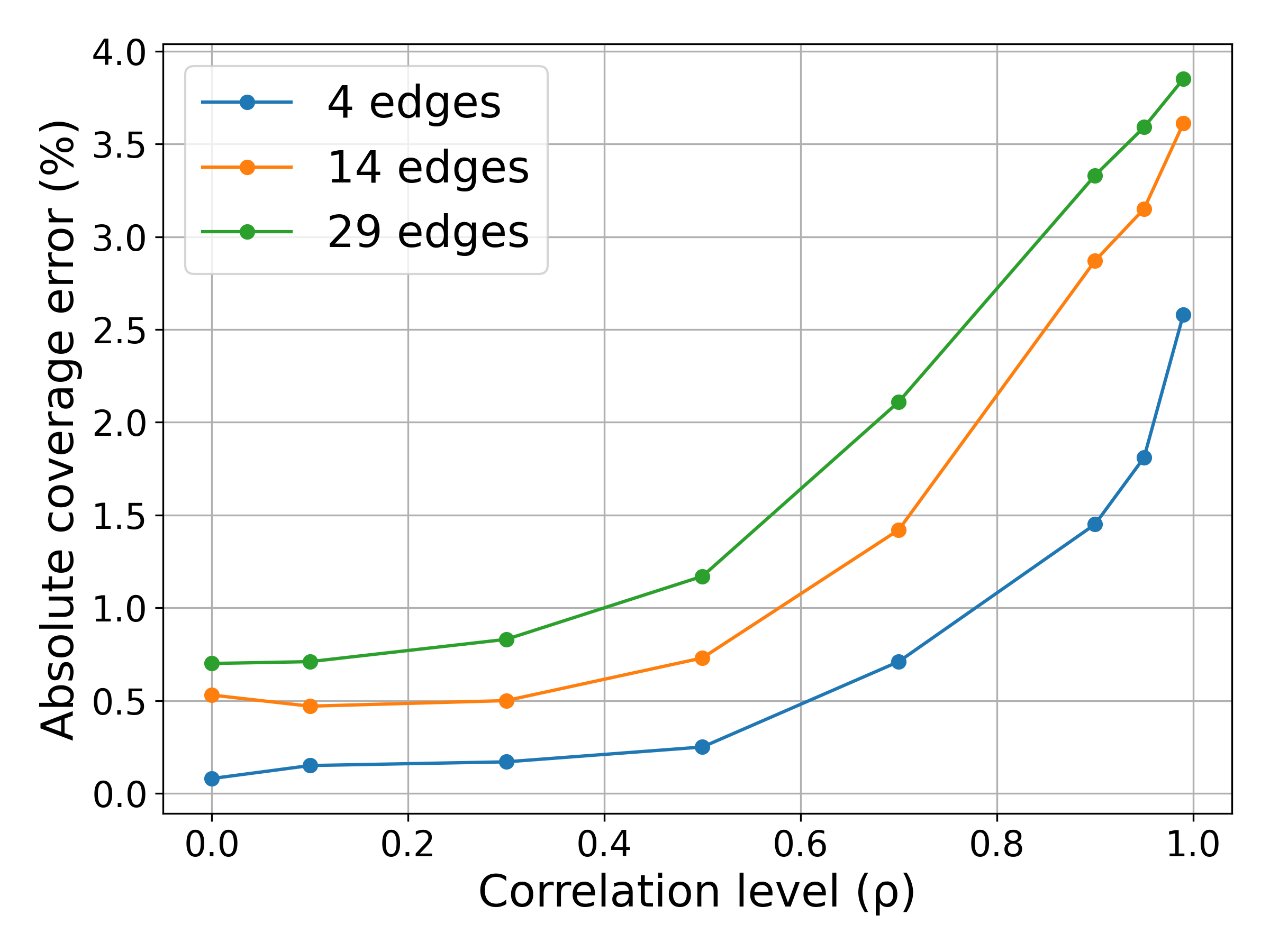}
    \end{subfigure}
    \caption{\change{(a) Empirical \(\var_{0.1}\) coverage produced by~\bucketedvar~algorithm with respect to increasing correlation and path length. (b) Absolute coverage error (in percent) between \(\cvar_{0.1}\) estimate produced by~\bucketedvar~and the baseline estimate. This is also plotted against increasing correlation and path length.}}
    \label{fig:gaussian-correlated-noise-exps}
\end{figure}

\change{\textbf{Experiment 4: Robustness to correlated losses.}
To test the quality of the union bound approximations made by the~\bucketedvar~algorithm when losses are not independent, we setup an artificial agent graph which consists of a path of varying length along with loss functions that sample from correlated Gaussian distributions (complete description in~\Cref{sec:correlation-benchmark-desc}).
This allows us to study the tightness of the approximation with increasing correlation (\(\rho\)) and path length. 
The results are plotted in~\Cref{fig:gaussian-correlated-noise-exps}.
While the approximation breaks down when we have complete correlation (\(\rho = 1\)), it is interesting to note that the algorithm continues to produce reasonable approximations upto a correlation level of \(\rho = 0.5\).
}


\section{Limitations}
A limitation of our algorithm is that we require the losses of agents to be independent to obtain the theoretical guarantee regarding the tightness of the value-at-risk estimates.
Empirically, however, we observe that it continues to produce tight estimates even without any formal independence guarantees which indicates that we may be able to relax this assumption.
Regardless, we emphasize that this independence assumption holds when the losses are metrics of the agent behavior rather than task performance which is the case in all our benchmarks and examples.
Another limitation is that we may not always have well defined loss functions like in the case of an LLM agent and we may have to rely on LLM-as-a-judge methods or human annotations which may be expensive and noisy.
Lastly, the sample complexity of our method can be high to obtain accurate risk measure estimates. 
However, if our only objective is to find the optimal path without computing risk measures, then online optimization using a multi-armed bandits style approach could be explored.

\section{Conclusion and future directions}
In summary, we introduced a framework for choosing optimal agent compositions to minimize the value-at-risk of losses that encode violations of safety, fairness, and privacy requirements. 
Our dynamic programming approach efficiently allocates the risk budget across agents, supporting tight approximations. 
Empirical results on compositional RL benchmarks confirm its effectiveness in identifying optimal paths and quantifying tail behaviors, paving the way for more reliable agentic workflows.

A direction for the future is to incorporate the~\bucketedvar~algorithm in existing agentic frameworks. 
It is also possible to speed up the algorithm through a GPU implementation since the computation of the intermediate VaR estimates is highly parallelizable.
While we present an efficient algorithm to minimize the value-at-risk in agent graphs, it is also an interesting future direction to extend our work to the estimation of other popular risk measures that are included in the class of coherent risk measures~\citep{artzner:crm}.
Yet another interesting direction is to explore whether we can develop online multi-armed bandit style algorithms to minimize risk. 
Such an algorithm would have to pick a composition of agents at each turn and minimize long-term regret with respect to the optimal composition.
This can reduce sample complexity by directly converging towards the optimal composition circumventing the risk measure computation.
Recent works in risk-aware bandits~\citep{tan:survey-risk-aware-mab} and combinatorial bandits~\citep{ayyagari:risk-combinatorial-bandits} seem to indicate that this is feasible.

\section{Reproducibility statement}
The source code along with detailed instructions to reproduce results and plots have been included in the supplementary materials. Further benchmark and experimental details are also included in~\Cref{sec:benchmarks-desc-evals}.

\subsection*{Acknowledgements}
This research was partially supported by the NSF Award SLES 2331783.

\bibliography{refs}

\newpage
\appendix
\section{Use of large language models (LLMs)}
LLMs were used as code assistants in writing and debugging parts of the codebase. They were also used to polish the writing.

\section{Proofs of correctness, efficiency, and optimality of the BucketedVaR algorithm}\label{sec:proofs-algo}

\begin{proof}[Proof of~\Cref{thm:correctness-bucketed-var}]
We first establish the correctness of the algorithm by showing that, with high probability, the value-at-risk estimate $q$ returned by the~\bucketedvar~algorithm is at least as large as the $(1-\alpha-\gamma)$-quantile of the loss distribution along the returned path $p$.

Let $p = v_1 \rightarrow v_2 \rightarrow \cdots \rightarrow v_k$ be the path returned by the algorithm, where $v_1 = s$ and $v_k = t$. For $i \in \{1, \ldots, k-1\}$, let $e_i = (v_i, v_{i+1})$ be the edges along this path, and let $\alpha_i$ be the risk budget allocated to edge $e_i$ such that $\sum_{i=1}^{k-1} \alpha_i = \alpha$. These allocations correspond to the buckets selected by the algorithm.

For each edge $e_i$, let $R_i = L_{e_i}(Z_i)$ be the random variable representing the loss along that edge. The loss along the entire path is given by $L_p(Z_p) = \max(R_1, \ldots, R_{k-1})$. 

By the union bound property of probabilities, we have
\begin{align}
\Pr\left[\max(R_1, \ldots, R_{k-1}) > q\right] &\leq \sum_{i=1}^{k-1} \Pr[R_i > q] \\
&= \sum_{i=1}^{k-1} \alpha_i \\
&= \alpha
\end{align}
where we set $q$ such that $\Pr[R_i > q] = \alpha_i$.

This implies that $q \geq \text{VaR}_\alpha[L_p(Z_p)]$. However, in the algorithm, we estimate the $(1-\alpha_i)$-quantile for each edge empirically using $n$ samples. By the Dvoretzky-Kiefer-Wolfowitz (DKW) inequality~\citep{dkw1,dkw2}, with probability at least $1-\delta'$, the empirical CDF $\hat{F}_i$ for each edge satisfies:
\begin{equation}
\sup_{x \in \mathbb{R}} |\hat{F}_i(x) - F_i(x)| \leq \sqrt{\frac{1}{2n}\ln\frac{2}{\delta'}}
\end{equation}
where $F_i$ is the true CDF of $R_i$.

For our algorithm, we need this to hold for all the empirical CDFs that we estimate.
By looking at the pseudocode, we remark that we make at most \(\abs{V}^2(d+1)^2\) quantile estimations. 
So setting $\delta' = \frac{\delta}{(d+1)^2|V|^2}$ and applying the union bound, with probability at least $1-\delta$, all empirical CDFs are within $\sqrt{\frac{1}{2n}\ln\frac{2(d+1)^2|V|^2}{\delta}}$ of their true CDFs.

Therefore, the error in quantile estimation for the path \(p\) is at most 
\begin{equation}
    \gamma = (k-1)\sqrt{\frac{1}{2n}\ln\left(\frac{2(d+1)^2|V|^2}{\delta}\right)} \leq \abs{V}\sqrt{\frac{1}{2n}\ln\left(\frac{2(d+1)^2|V|^2}{\delta}\right)}.
\end{equation}
This implies that, with probability at least $1-\delta$, the estimated value-at-risk $q$ is at least as large as the $(1-\alpha-\gamma)$-quantile of $L_p(Z_p)$.

For the time complexity analysis, observe that the algorithm processes each vertex-bucket pair \((v, \bar\alpha)\) at most once and there are at most \(\abs{V}(d+1)\) such pairs. For each pair, it considers all its predecessors vertices \(v'\) and bucket allocation \(\alpha' \in B_{\leq \bar\alpha}\).
There are again at most \(\abs{V}(d+1)\) pairs. 
For each combination, it processes $n$ samples. 
Thus, the overall time complexity is $O(n(d+1)^2|V|^2)$, assuming constant-time sampling operations.
\end{proof}

\begin{proof}[Proof of~\Cref{thm:optimality}]
Under the independence assumption of losses along edges, we can provide a tighter analysis of the optimality of the BucketedVaR algorithm.

Let $p^* = v_1^* \rightarrow v_2^* \rightarrow \cdots \rightarrow v_{k^*}^*$ be the optimal path minimizing the value-at-risk, where $v_1^* = s$ and $v_{k^*}^* = t$. For $i \in \{1, \ldots, k^*-1\}$, let $e_i^* = (v_i^*, v_{i+1}^*)$ be the edges along this path. 

Let $R_1^*, \ldots, R_{k^*-1}^*$ be the independent random variables representing the losses along the edges of the optimal path. The loss along the entire optimal path is given by $L_{p^*}(Z_{p^*}) = \max(R_1^*, \ldots, R_{k^*-1}^*)$.

For independent random variables, the CDF of the maximum is the product of the individual CDFs
\begin{equation}
F_{\max}(x) = \Pr[\max(R_1^*, \ldots, R_{k^*-1}^*) \leq x] = \prod_{i=1}^{k^*-1} \Pr[R_i^* \leq x] = \prod_{i=1}^{k^*-1} F_i(x)
\end{equation}

Now, suppose we optimally allocate the risk budget $\alpha$ among the edges of the optimal path, assigning $\alpha_i^*$ to edge $e_i^*$ such that $\sum_{i=1}^{k^*-1} \alpha_i^* = \alpha$. 

The optimal allocation would ensure that the $(1-\alpha_i^*)$-quantile is the same for all edges, which we denote as $q^*$. This means $F_i(q^*) = 1-\alpha_i^*$ for all $i$. Since BucketedVaR searches over discretized buckets, as $d \rightarrow \infty$, it approaches this optimal allocation.

Under this optimal allocation, the CDF of the maximum at $q^*$ is
\begin{align}
F_{\max}(q^*) &= \prod_{i=1}^{k^*-1} F_i(q^*) \\
&= \prod_{i=1}^{k^*-1} (1-\alpha_i^*)
\end{align}

To upper bound this product, we use the inequality $(1-x) \leq e^{-x}$ which holds for all $x \in [0,1]$
\begin{align}
\prod_{i=1}^{k^*-1} (1-\alpha_i^*) &\leq \prod_{i=1}^{k^*-1} e^{-\alpha_i^*} \\
&= e^{-\sum_{i=1}^{k^*-1} \alpha_i^*} \\
&= e^{-\alpha}
\end{align}

For all $\alpha > 0$, we can use the Taylor expansion of $e^{-\alpha}$ around zero
\begin{equation}
e^{-\alpha} = 1 - \alpha + \frac{\alpha^2}{2!} - \frac{\alpha^3}{3!} + \frac{\alpha^4}{4!} - \ldots
\end{equation}

To establish a strict upper bound on $e^{-\alpha}$, we consider the function $h(\alpha) = 1 - \alpha + \frac{\alpha^2}{2} - e^{-\alpha}$ and show that $h(\alpha) > 0$ for all $\alpha > 0$.

Note that $h(0) = 0$ and $h'(\alpha) = -1 + \alpha + e^{-\alpha}$. At $\alpha = 0$, we have $h'(0) = 0$.
Computing the second derivative, $h''(\alpha) = 1 - e^{-\alpha}$, we see that $h''(\alpha) > 0$ for all $\alpha > 0$ since $e^{-\alpha} < 1$ when $\alpha > 0$.

Since $h''(\alpha) > 0$ for all $\alpha > 0$, the function $h'(\alpha)$ is strictly increasing for $\alpha > 0$. Given that $h'(0) = 0$, this means $h'(\alpha) > 0$ for all $\alpha > 0$.

As $h'(\alpha) > 0$ for all $\alpha > 0$, the function $h(\alpha)$ is strictly increasing for $\alpha > 0$. Since $h(0) = 0$, we conclude that $h(\alpha) > 0$ for all $\alpha > 0$.

Therefore, we have established the strict inequality
\begin{equation}
e^{-\alpha} < 1 - \alpha + \frac{\alpha^2}{2} \quad \text{for all } \alpha > 0
\end{equation}

This gives us
\begin{equation}
F_{\max}(q^*) \leq 1 - \alpha + \frac{\alpha^2}{2}
\end{equation}

which implies that
\begin{equation}
q^* \leq \quantile(L_{p^*}(Z_{p^*}), 1-\alpha+\frac{\alpha^2}{2})
\end{equation}

As $n \rightarrow \infty$, the empirical quantile estimates converge to the true quantiles, and as $d \rightarrow \infty$, the discretization error in budget allocation vanishes. Therefore, asymptotically, the value-at-risk estimate $q$ returned by BucketedVaR satisfies $q \leq \quantile(L_{p^*}(Z_{p^*}), 1-\alpha+\frac{\alpha^2}{2})$, which completes the proof.
\end{proof}

\section{Benchmark environments and experimental evaluation setups}\label{sec:benchmarks-desc-evals}

\subsection{Maximizing safety during navigation and manipulation}
We test our algorithm on the 16-Rooms and Fetch environments from \citet{jothimurugan21:dirl} who present a compositional RL framework to complete long-horizon objectives in continuous environments.
An objective is specified as a \textit{task graph} which is the same as an agent graph.
In the task graph, every edge represents a \textit{reach-avoid} subtask: reaching a target region while avoiding a set of dangerous states.
They design an algorithm that finds the path that maximizes the probability of completing all the reach-avoid subtasks along the path where the control policies along the edges are trained using RL.
Each reach-avoid subtask specifies a real-valued reward function that takes in a trajectory of execution of the policy and evaluates to at least zero if the trajectory successfully completes the subtask.
Intuitively, the reward captures a distance metric between trajectories and reach-avoid sets. 
The loss function in the agent graph is defined as the negative of the reward function.

The 16-Rooms environment consists of 16 rooms arranged in a \(4 \times 4\) grid with doors between adjacent rooms.
RL policies are trained to control a point mass to complete reach-avoid tasks. 
The agent graph has 13 vertices and 16 paths of length 8 each.
The second Fetch environment~\citep{plappert:fetch} is taken from Gymnasium-Robotics and involves controlling a 7-DoF robotic arm with a gripper attached at the end to pick up and move objects.
We consider the task graph for the PickAndPlaceChoice objective that directs the gripper to move near the object, grip it, pick it up and move along one of two trajectories. 
The graph has 7 vertices and 2 paths of length 5. 
Additionally, we adapt the Rooms environment to implement a version of the drone navigation task from~\exampledronenav. 

\subsubsection{16-Rooms}

\begin{figure}
    \centering
    \includegraphics[width=0.35\linewidth]{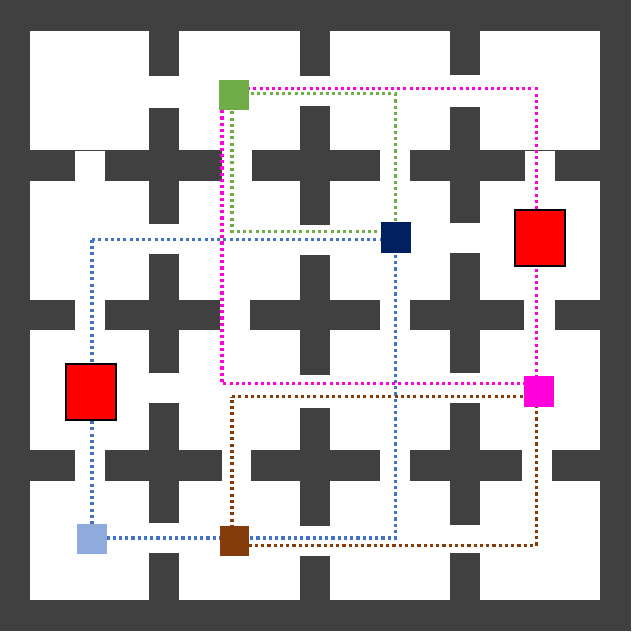}
    \caption{16-Rooms environment with obstacles to avoid in red. The light blue square in the bottom left corner is the initial room of the point mass. The first subgoal is the dark blue square, followed by green, pink, and brown. Image taken from~\citet{jothimurugan21:dirl}.}
    \label{fig:16-rooms}
\end{figure}

\paragraph{Environment and agent graph.} This benchmark environment is borrowed from~\citet{jothimurugan21:dirl}. 
The state and action spaces are continuous and involves controlling a point mass to navigate between rooms while avoiding obstacles.
A map of the environment is shown is shown in~\Cref{fig:16-rooms}.
\citet{jothimurugan21:dirl} specify objectives in the form of task graphs where each vertex corresponds to a set of subgoal states and each edge represents a reach-avoid subtask of reaching the target vertex's subgoal states.
The task graph that we consider corresponds to the specification \(\phi_5\) from~\citet{jothimurugan21:dirl} and is also visualized in~\Cref{fig:16-rooms}.
It consists of 13 vertices arranged in a sequence of 4 diamonds with 4 vertices in each diamond like in~\Cref{fig:agent-graph}. 
The first diamond specifies starting from the light blue square and reaching the room marked with the dark blue square in~\Cref{fig:16-rooms} using either the top path while avoiding the red obstacle or using the bottom path.
Similarly, the next diamond specifies the two paths to reach the green square, followed by the pink, and brown squares.
Each edge in the task graph is also associated with a real-valued reward function that evaluates to at least zero when a trajectory of the controller satisfies the reach-avoid subtask.
This is function of the distance between a trajectory and the obstacles and between the trajectory and the subgoal region.
So if the reward is positive then it means that the trajectory stays at least a certain distance away from the obstacles and reaches at most a certain distance to the goal.

Mapping a task graph to an agent graph is simple: to define the loss functions along each edge, we simply consider the negative of the reward function so that an upper bound on the loss corresponds to a lower bound on the reward.
The agents along each edge are RL policies trained to complete the reach avoid subtasks.
The policies are trained using the ARS algorithm~\citep{mania:ars} using the same recipe as described in~\citet{jothimurugan21:dirl}.
Since the training of policies are dependent on the initial state distribution, we train the policies lazily when required by the~\bucketedvar~algorithm by approximating the initial state distribution from samples from the previous policy.

\paragraph{Additional experimental observations.} In~\Cref{table:comparison-baseline}, since the estimated~\var~at all levels for the 16-Rooms benchmark is negative, it implies that with probability at least \(1-\alpha\), the reward along all the edges of the path are at least the positive estimated~\var, where \(\alpha\) is the risk level.

\subsubsection{Fetch}
The Fetch robotic arm environment~\citep{plappert:fetch} from Gymnasium Robotics is a simulated robot arm environment with a gripper attached.
The reach-avoid task graph is again taken from~\citet{jothimurugan21:dirl} and has 8 vertices with two paths.
It corresponds to the PickAndPlaceChoice specification and involves moving the gripper close to an object, gripping it, picking it up, and moving it along one of two trajectories.
The policies are trained using TD3~\citep{fujimoto:td3} with the hyperparameters specified by~\citet{jothimurugan21:dirl}.

\subsubsection{DroneNav}
As described previously, we adapt the 16-Rooms environment to implement the DroneNav environment from~\exampledronenav.
Additionally, we adjust the sizes of the obstacles so that one of the paths has bigger obstacles than the other path.

\paragraph{Additional experimental observations.} This environment was designed as a counterexample---the algorithm from~\citet{jothimurugan21:dirl} would be unable to pick a path in the graph since it only maximizes the success probability and in this case both are feasible to complete the reach-avoid tasks with probability 1. On the other hand, optimizing the~\wormobj~objective would give us that the path with the smaller obstacles is safer.

\subsection{Minimizing resource consumption}
We use the Miniworld~\citep{boisvert:miniworld} framework to implement the BoxRelay environment of which the top-view map is shown in~\Cref{fig:boxrelay-topview}.
Observations in this environment are given as first-person view RGB images (see~\Cref{sec:boxrelay-benchmark-desc-evals}).
The objective is to move the player (red triangle) to the first box (marked 1) and pick it up. 
The player can then choose to go to either of the two boxes in the two middle rooms (3 or 7), place the initial box, pick up the next box, and finally drop it at the location marked in the right room with a star.
Furthermore, after picking up the box at 7, the player can choose to exit through one of the two exits 8 or 9.
We add the constraint that the player has limited battery power but gets recharged after dropping the first box and before picking up the second box.
So we would like to minimize the amount of time that either box is held by the player.
Importantly, since the initial positions of the player, the boxes, and the final target are all chosen uniformly at random within each room, this is a non-trivial optimization problem.

We can model the BoxRelay task as an agent graph as follows: the graph is as shown in~\Cref{fig:boxrelay-topview} with the numbers being the vertices.
The edges represent the agents which are RL policies trained to navigate the player to pick up and place objects at the desired locations.
The loss functions evaluate to the number of time steps that a box is carried before being put back down and thus quantifies the resource consumption.
Here, the~\wormobj~objective corresponds to minimizing the \textit{maximum} time that any box is carried by the player.

\subsubsection{BoxRelay}\label{sec:boxrelay-benchmark-desc-evals}

\begin{figure}
    \centering
    \begin{subfigure}{0.26\textwidth}
        \centering
        \includegraphics[width=\linewidth]{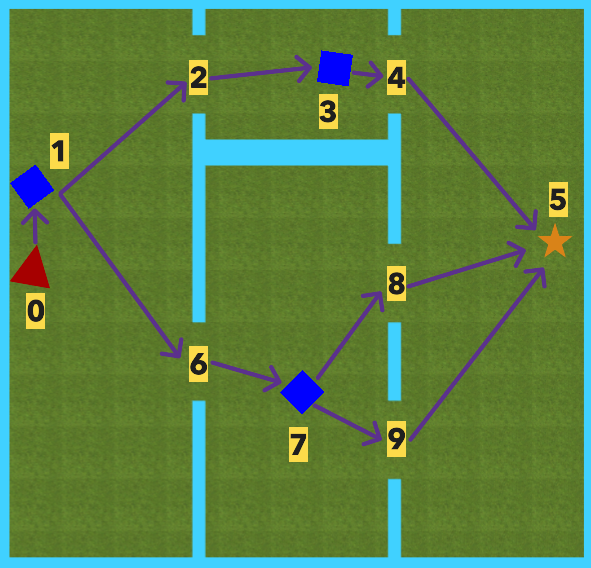}
        \caption{Top-view map with agent graph}
        \label{fig:boxrelay-topview}
    \end{subfigure}
    \hspace{0.5mm}
    \begin{subfigure}{0.33\textwidth}
        \centering
        \includegraphics[width=\linewidth]{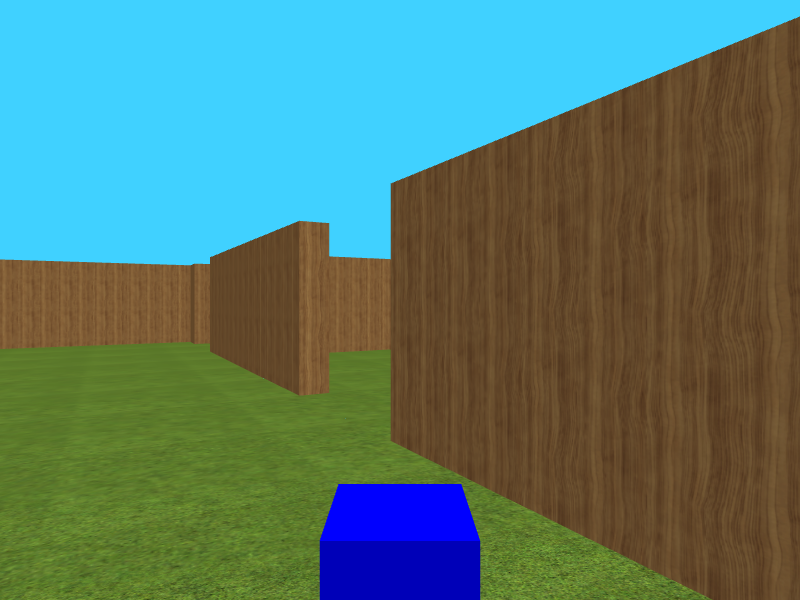}
        \caption{First-person view when carrying box}
        \label{fig:boxrelay-agentview}
    \end{subfigure}
    \caption{BoxRelay environment}
    \label{fig:boxrelay-env}
\end{figure}

The environment and the agent graph details are presented in~\Cref{sec:evals}.
The policies are trained using the PPO~\citep{schulman:ppo} CNN-Policy algorithm from StableBaselines3 package with the default hyperparameters for 500,000 iterations for each edge policy.
The initial state distribution was again estimated by taking rollouts of the previous edge policy.
The loss functions in the agent graph are defined as follows where the vertices are numbered as shown in~\Cref{fig:boxrelay-topview}.
\begin{itemize}
    \item \(L_e(t) \coloneqq -\infty\) for edges \(e \in \{(0,1), (1,2), (1,6),(3,4),(7,8),(7,9)\}\) and for all trajectories \(t\).
    \item The other edge loss functions for \(e \in \{(2,3),(6,7),(4,5),(8,9),(9,5)\}\) return the number of time steps that the box was carried by the player by cumulating it with the number of time steps from the previous agent which also carried the box.
\end{itemize}

\subsection{Correlated Gaussian Noise Experiment}\label{sec:correlation-benchmark-desc}
\change{For experiment 4, to measure the robustness of our algorithm to correlations in the loss distributions, we create a synthetic agent graph which is simply a path with a varying number of edges \(p > 0\). 
Let \(C, X_1, \dots, X_p \sim N(0, 1)\) be i.i.d. Gaussian variables with zero mean and unit variance. 
Then for a given correlation level \(\rho \in [0,1]\), we define the correlated edge loss variables \(L_1, \dots, L_p\) as
\begin{equation}
    L_i \coloneqq \rho C + \sqrt{1 - \rho^2}X_i.
\end{equation}
Thus, under complete correlation \(\rho = 1\), all the edge loss random variables represent the same variable \(C\) and under zero correlation \(\rho = 0\), they are independent.}

\subsection{Experiment computational requirements}
All experiments were completed on one computer with an Intel Xeon 6248 CPU and one NVIDIA GeForce RTX 2080 GPU.
\section{Experimental results}

\begin{table}
  \caption{Comparison of the approximated~\var~with the optimal baseline algorithm to compute \(\var\) on \(10^4\) samples. We prefer empirical quantiles that are close to \(1-\alpha\) where \(\alpha\) is the risk level. 95\% Clopper-Pearson CIs are specified for the empirical quantiles computed on a fresh set of \(10^4\) samples.}
  \label{table:comparison-baseline}
  \centering
  \begin{tabular}{lllllll}
    \toprule
Benchmark & \begin{tabular}[c]{@{}l@{}}Risk \\ level\end{tabular} & Buckets & \begin{tabular}[c]{@{}l@{}}\var\\ estimate\end{tabular} & \begin{tabular}[c]{@{}l@{}}Baseline\\ estimate\end{tabular} & \begin{tabular}[c]{@{}l@{}}\var\\ quantile (\%)\end{tabular} & \begin{tabular}[c]{@{}l@{}}Baseline\\ quantile (\%)\end{tabular} \\
\midrule
DroneNav  & 0.2                                                    & 5         & -0.339                                                         & -0.339                                                      & \percp{79.56}{78.75}{80.34} & \percp{79.56}{78.75}{80.34} \\
          & 0.1                                                    & 5         & -0.330                                                         & -0.330                                                      & \percp{89.77}{89.15}{90.35} & \percp{89.77}{89.15}{90.35}                                                            \\
          & 0.05                                                   & 5         & -0.328                                                         & -0.328                                                      & \percp{95.01}{94.56}{95.42} & \percp{95.01}{94.56}{95.42}                                                            \\
\midrule
16-Rooms  & 0.2                                                    & 100       & -0.0253                                                        & -0.0277                                                     & \percp{82.74}{81.98}{83.47} & \percp{81.26}{80.48}{82.02}                                                            \\
          & 0.1                                                    & 100       & -0.0135                                                        & -0.0135                                                     & \percp{91.24}{90.66}{91.78} & \percp{91.24}{90.66}{91.78}                                                            \\
          & 0.05                                                   & 100       & -0.0084                                                        & -0.0084                                                     & \percp{95.45}{95.02}{95.85} & \percp{95.45}{95.02}{95.85}                                                            \\
\midrule
Fetch     & 0.2                                                    & 30        & 0.0314                                                         & 0.0250                                                      & \percp{80.45}{79.65}{81.22} & \percp{79.58}{78.77}{80.36}                                                            \\
          & 0.1                                                    & 30        & 0.3483                                                         & 0.3393                                                      & \percp{90.92}{90.33}{91.47} & \percp{90.53}{89.93}{91.09}                                                            \\
          & 0.05                                                   & 30        & 0.3914                                                         & 0.3881                                                      & \percp{95.40}{94.97}{95.80} & \percp{95.28}{94.84}{95.68}                                                            \\
\midrule
BoxRelay  & 0.2                                                    & 50        & 151                                                            & 148                                                         & \percp{81.79}{81.01}{82.54} & \percp{80.56}{79.77}{81.33}                                                            \\
          & 0.1                                                    & 50        & 176                                                            & 175                                                         & \percp{90.90}{90.31}{91.45} & \percp{90.70}{90.11}{91.26}                                                            \\
          & 0.05                                                   & 50        & 210                                                            & 208                                                         & \percp{95.46}{95.03}{95.85} & \percp{95.34}{94.90}{95.74} \\
    \bottomrule
  \end{tabular}
\end{table}

\begin{table}
  \caption{\change{Comparison of the approximated~\cvar~with the optimal baseline algorithm on \(10^4\) samples. Empirical coverages are computed on a fresh set of \(10^4\) samples.}}
  \label{table:comparison-baseline-cvar}
  \centering
  \begin{tabular}{lllllll}
    \toprule
Benchmark & \begin{tabular}[c]{@{}l@{}}Risk \\ level\end{tabular} & Buckets & \begin{tabular}[c]{@{}l@{}}\cvar\\ estimate\end{tabular} & \begin{tabular}[c]{@{}l@{}}Baseline\\ estimate\end{tabular} & \begin{tabular}[c]{@{}l@{}}\cvar\\ quantile (\%)\end{tabular} & \begin{tabular}[c]{@{}l@{}}Baseline\\ quantile (\%)\end{tabular} \\
\midrule
DroneNav  & 0.2                                                    & 30         & 17.648                                                         & 17.649                                                      & 87.08 & 88.30 \\
          & 0.1                                                    & 30         & 17.652                                                         & 17.652                                                      & 94.72 & 94.72                                                           \\
          & 0.05                                                   & 30         & 17.654                                                         & 17.654                                                      & 96.99 & 96.99                                                            \\
\midrule
16-Rooms  & 0.2                                                    & 100       & -0.0134                                                        & -0.0139                                                     & 90.82 & 89.75                                                            \\
          & 0.1                                                    & 100       & -0.0077                                                        & -0.0073                                                     & 95.26 & 94.34                                                            \\
          & 0.05                                                   & 100       & -0.0051                                                        & -0.0045                                                     & 96.55 & 98.70                                                            \\
\midrule
Fetch     & 0.2                                                    & 30        & 0.359                                                         & 0.317                                                      & 88.96 & 88.15                                                            \\
          & 0.1                                                    & 30        & 0.411                                                         & 0.409                                                      & 96.74 & 96.74                                                            \\
          & 0.05                                                   & 30        & 0.451                                                         & 0.453                                                      & 98.19 & 98.25                                                            \\
\midrule
BoxRelay  & 0.2                                                    & 50        & 180.54                                                            & 179.86                                                         & 93.57 & 93.49                                                           \\
          & 0.1                                                    & 50        & 210.96                                                            & 210.61                                                         & 96.47 & 96.47                                                            \\
          & 0.05                                                   & 50        & 246.30                                                            & 246.27                                                         & 98.24 & 98.24 \\
    \bottomrule
  \end{tabular}
\end{table}


Complete evaluation results of the~\bucketedvar~algorithm along with comparison with the baseline optimal algorithms to compute \(\var\) and \(\cvar\) are given in~\Cref{table:comparison-baseline}.

\end{document}